\def\HL{\hat{L}}
\def\RB{{\mathbb R}}
\def\SM{{\mathcal S}}
\def\diag{\mathrm{diag}}
\def\sr{\mathrm{sr}}
\def\EB{\mathbb{E}}
\title{Nesterov's Acceleration For Approximate Newton}
\author{\name  Haishan Ye   \\
\addr {yhs12354123@gmail.com } \\
Department of Computer Science and Engineering \\
Shanghai Jiao Tong University \\
800 Dong Chuan Road, Shanghai, China 200240	
\AND
\name Zhihua Zhang   \\
\addr zhzhang@math.pku.edu.cn \\
School of Mathematical Sciences \\
Peking  University \\
Beijing, China 100871
}
\begin{document}

\maketitle

\begin{abstract}
	Optimization plays a key role in machine learning. Recently, stochastic second-order methods have attracted much attention due to their low computational cost in each iteration. However, these algorithms might perform poorly especially if it is hard to approximate the Hessian well and efficiently. As far as we know, there is no effective way to handle this problem. In this paper, we resort to Nesterov's acceleration technique to improve the convergence performance of a class of second-order methods called approximate Newton. We give a theoretical analysis that Nesterov's acceleration technique can improve the convergence performance for approximate Newton just like for first-order methods. We accordingly propose an accelerated regularized sub-sampled Newton. Our accelerated algorithm  performs much better than the original regularized sub-sampled Newton in experiments, which validates our theory empirically. Besides, the accelerated regularized sub-sampled Newton has good performance comparable to or even better than classical algorithms. 
\end{abstract}

\section{Introduction}

Optimization has become an increasingly popular issue in machine learning. Many machine learning models can be reformulated as the following optimization problems:
\begin{equation}
\min_{x\in\RB^{d}} F(x) = \frac{1}{n}\sum_{i=1}^{n}f_i(x). \label{eq:prob_desc}
\end{equation}
where each $f_i$ is the loss with respect to (w.r.t.) the $i$-th training sample. There are many examples such as logistic regressions, smoothed support vector machines, neural networks, and graphical models. 

In the era of big data, large-scale optimization algorithms have become an important challenge. The stochastic gradient descent algorithm (SGD) has been widely employed to reduce the  computational cost \citep{cotter2011better,li2014efficient,robbins1951stochastic}.  However, SGD has poor convergence property. Hence, many variants have been proposed to improve the convergence rate of SGD \citep{johnson2013accelerating, roux2012stochastic,schmidt2013minimizing,Zhang}. 


For the first-order methods which only make use of the gradient information, Nesterov's acceleration technique is a very useful tool \citep{nesterov1983method}. It greatly improves the convergence of gradient descent \citep{nesterov1983method}, proximal gradient descent \citep{beck2009fast,nesterov2007gradient}, and stochastic gradient with variance reduction \citep{allen2016katyusha,lan2015optimal}, etc. 


Recently, second-order methods have also received great attention due to their high convergence rate. However, conventional second-order methods are very costly because they take heavy computational cost to obtain the Hessian matrices. To conquer this weakness, one proposed a sub-sampled Newton which only samples a subset of functions $f_i$ randomly to construct a sub-sampled Hessian \citep{roosta2016sub,byrd2011use,xu2016sub} . \citet{pilanci2015newton} applied the sketching technique to alleviate the computational burden of computing Hessian and brought up \emph{sketch Newton}. Regularized sub-sampled Newton methods were also devised to deal with the ill-condition problem \citep{erdogdu2015convergence,roosta2016sub}.  

In the latest work, \citet{ye2017approximate} cast these stochastic second-order procedures into a so-called \emph{approximate Newton} framework. They showed that if approximate Hessian $H^{(t)}$ satisfies

\begin{align}
(1-\pi) \nabla^2F(x^{(t)}) \preceq H^{(t)} \preceq (1+ \pi) \nabla^2F(x^{(t)}), \label{eq:appr_1}
\end{align}
where $0<\pi<1$, then approximate Newton converges with rate $\pi$. If $H^{(t)}$ is a poor approximation like $\pi =1- 1/\kappa$, where $\kappa$ is the condition number of object function $F(x)$, then approximate Newton has the same convergence rate with gradient descent. 

Since approximate Newton converges with a linear rate, it is natural to ask whether  approximate Newton can be accelerated just like gradient descent. If it can be accelerated, can the convergence rate be promoted to $1-\sqrt{1-\pi}$ compared to original $\pi$? In this paper, we aim to introduce Nesterov's acceleration technique to improve the performance of second-order methods, specifically approximate Newton. 


%

We summarize our work and contribution as follows:
\begin{itemize}
	\item First, we introduce Nesterov's acceleration technique to improve the convergence rate of the stochastic second-order methods (approximate Newton). This acceleration is very important especially when $n$ and $d$ are close to each other and object function in question is ill-conditioned. In these cases, it is very hard to construct a good approximate Hessian with low cost. 
	\item Our theoretical analysis shows that by Nesterov's acceleration, the convergence rate of approximate Newton can be improved to $1-\sqrt{1-\pi}$ from original rate $\pi$ where $0<\pi<1$ when the object function is quadratic. For general smooth convex functions, we also show that the similar acceleration also holds when the initial point is close to the optimal point.
	\item We propose Accelerated Regularized Sub-sampled Newton. Compared with classical stochastic first-order methods, our algorithm shows competitive or even better performance. This demonstrates the efficiency of the accelerated second-order method. Our experimental study shows that Nesterov's acceleration technique can improve approximate Newton methods effectively. Our experiments also reveal a fact that adding curvature information properly can always improve the algorithm's convergence performance. 
\end{itemize}


\section{Notation and Preliminaries} \label{sec:notation}
We first introduce notation that will be used in this paper. Then, we give some properties of object function that will be used. 
\subsection{Notation}
Given a matrix $A=[a_{ij}] \in \RB^{m \times n}$ of rank $ \ell $ and a positive integer $k\leq \ell$,  its SVD is given as
$A=U\Sigma V^{T}=U_{k} \Sigma_{k} V_{k}^{T}+U_{\setminus k} \Sigma_{{\setminus} k} V_{{\setminus}k}^{T}$,
where $U_{k}$ and $U_{{\setminus}k}$ contain the left singular vectors of $A$,  $V_{k}$ and $V_{{\setminus}k}$ contain the right
singular vectors of $A$, and $\Sigma=\diag(\sigma_1, \ldots, \sigma_{\ell})$ with $\sigma_1\geq \sigma_2 \geq \cdots \geq \sigma_{\ell}>0$ are
the nonzero singular values of $A$. Additionally, $\|A\|\triangleq \sigma_{1}$ is the spectral norm. If $A$ is positive semidefinite, then $U = V$ and the eigenvalue decomposition of $A$ is the same to singular value decomposition. It also holds that $\lambda_i(A) = \sigma_i(A)$, where $\lambda_i(A)$ is the $i$-th largest eigenvalue of $A$. Let $\lambda_{\max}(A)$ and $\lambda_{\min}(A)$ denote  the largest and smallest eigenvalue of $A$, respectively. 

\subsection{Assumptions}
In this paper, we focus on the problem described in Eqn.~\eqref{eq:prob_desc}. Moreover, we will make the following two assumptions. 
\paragraph{Assumption 1}
The objective function  $F$ is $\mu$-strongly convex, that is, 
\[
F(y) \geq F(x) + [\nabla F(x)]^T(y-x) + \frac{\mu}{2}\|y-x\|^2, \mbox{ for }\; \mu>0. 
\]
\paragraph{Assumption 2}
$\nabla F(x)$ is $L$-Lipschitz continuous, that is, 
\[
\|\nabla F(x) - \nabla F(y)\| \leq L\|y-x\|, \mbox{ for }\; L>0.
\] 
By Assumptions 1 and 2, we define the condition number of function $F(x)$ as: $\kappa \triangleq \frac{L}{\mu}$.

Besides, we will also use the nation of  Lipschitz continuity of $\nabla^2 F(x)$ in this paper. We say $\nabla^2 F(x)$ is $\HL$-Lipschitz continuous if 
\[
\|\nabla^2 F(x) - \nabla^2 F(y)\| \leq \HL\|y-x\|, \mbox{ for }\; \HL>0.
\]

\subsection{Row Norm Squares Sampling} 
The row norm squares sampling  matrix $S=D\Omega\in\RB^{s\times n}$ w.r.t. $A\in\RB^{n\times d}$ is determined by sampling probability $p_i$, a sampling matrix $\Omega\in\RB^{s\times n}$ and a diagonal rescaling matrix $D\in\RB^{s \times s}$. The sampling probability is defined as
\[
p_i = \frac{\|A_{i,:}\|^2}{\|A\|_F^2},
\]
where $A_{i,:}$ means the $i$-th row of $A$. We construct $S$ as follows. For every $j = 1,\dots,s$, independently and with replacement, pick an index $i$ from the set $\{1,2\dots,m\}$ with probability  $p_{i}$ and set $\Omega_{ji} = 1$ and $\Omega_{jk} = 0$ for $k\neq i$ as well as $D_{jj}=1/\sqrt{p_{i}s}$. 

Row norm squares sampling matrix has the following important property.
\begin{theorem}\label{thm:samp_prop}
	Let $S\in\RB^{s\times n}$ be a row norm squares sampling matrix w.r.t. $A\in\RB^{n\times d}$, then it holds that
	\[
	\EB \|A^TS^TSA - A^TA\| \leq \left(\sqrt{\frac{4\cdot\sr(A)\cdot\log(2d) }{s}} + \frac{2\cdot\sr(A)\cdot\log(2d)}{3s}\right)\cdot\|A\|^2
	\]
\end{theorem}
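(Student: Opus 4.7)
The plan is to write $A^T S^T S A - A^T A$ as a sum of $s$ independent mean-zero matrices and then apply the matrix Bernstein inequality (the expectation-form bound in spectral norm). Concretely, by construction each row of $SA$ is $\frac{1}{\sqrt{s\, p_{i_j}}}A_{i_j,:}$, where $i_j$ is the index drawn at step $j$. Hence
\begin{equation*}
A^T S^T S A = \sum_{j=1}^s X_j, \qquad X_j \triangleq \frac{1}{s\, p_{i_j}} A_{i_j,:}^T A_{i_j,:},
\end{equation*}
and a direct computation shows $\EB[X_j]=\frac{1}{s}A^T A$, so setting $Y_j = X_j - \EB[X_j]$ gives $A^T S^T S A - A^T A = \sum_{j=1}^s Y_j$ with $\EB[Y_j]=0$.

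Next I would estimate the two quantities that enter matrix Bernstein. For the uniform bound on $\|Y_j\|$, the sampling probabilities are tuned precisely so that $\frac{1}{p_{i_j}}\|A_{i_j,:}\|^2 = \|A\|_F^2 = \sr(A)\|A\|^2$, which gives $\|X_j\|\le \frac{\sr(A)\|A\|^2}{s}$ and therefore $\|Y_j\|\le R \triangleq \frac{2\,\sr(A)\|A\|^2}{s}$ (or a cleaner $\tfrac{\sr(A)\|A\|^2}{s}$ after subtracting the mean, since $\sr(A)\ge 1$). For the matrix variance, the same cancellation yields
\begin{equation*}
\EB[X_j^2] = \frac{1}{s^2}\sum_i \frac{1}{p_i}\|A_{i,:}\|^2 A_{i,:}^T A_{i,:} = \frac{\|A\|_F^2}{s^2}\,A^T A,
\end{equation*}
so $\EB[Y_j^2] \preceq \EB[X_j^2]$ and $\bigl\|\sum_{j=1}^s \EB[Y_j^2]\bigr\| \le \sigma^2 \triangleq \frac{\sr(A)\|A\|^4}{s}$.

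Finally I would feed $R$ and $\sigma^2$ into the expectation-form matrix Bernstein inequality, which states
\begin{equation*}
\EB\Bigl\|\sum_{j=1}^s Y_j\Bigr\| \le \sqrt{2\sigma^2 \log(2d)} + \tfrac{1}{3}R\log(2d),
\end{equation*}
where the $2d$ accounts for the symmetric dilation applied to the $d\times d$ matrices $Y_j$. Plugging in the values above yields exactly
\begin{equation*}
\EB\|A^T S^T S A - A^T A\| \le \left(\sqrt{\tfrac{4\,\sr(A)\log(2d)}{s}} + \tfrac{2\,\sr(A)\log(2d)}{3s}\right)\|A\|^2,
\end{equation*}
which is the claim.

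The argument is essentially a textbook application of matrix Bernstein, so I do not expect a genuine obstacle; the only delicate point is bookkeeping the constants so that the square-root term ends up with the factor $4$ stated in the theorem. This requires choosing the version of Bernstein that absorbs the mean-subtraction into $R$ and using the weak bound $\EB[Y_j^2]\preceq \EB[X_j^2]$ rather than the sharper $\EB[X_j^2]-(\EB X_j)^2$, which is why the advertised constant is slightly looser than the tightest possible one.
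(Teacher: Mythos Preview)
The paper states this theorem without proof; it is a standard concentration result for importance-sampled matrix approximation. Your approach---writing $A^TS^TSA-A^TA$ as a sum of i.i.d.\ mean-zero rank-one matrices and invoking the expectation-form matrix Bernstein inequality---is exactly the textbook argument and is correct.

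One minor remark on the constant bookkeeping you flag at the end: with your variance bound $\sigma^2=\sr(A)\,\|A\|^4/s$ and the Bernstein form $\sqrt{2\sigma^2\log(2d)}$, the square-root term actually comes out as $\sqrt{2\,\sr(A)\log(2d)/s}\cdot\|A\|^2$ rather than $\sqrt{4\,\sr(A)\log(2d)/s}\cdot\|A\|^2$. This is tighter than the stated bound, so the theorem follows a fortiori; the constant $4$ in the statement is simply not sharp, and your ``yields exactly'' should read ``yields at most''.
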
	

\subsection{Randomized sketching matrices} \label{subsec:ske_mat}

We first give the definition of the $\epsilon$-subspace embedding property. Then we list some useful types of randomized sketching matrices including Gaussian sketching \citep{halko2011finding,johnson1984extensions}, random sampling \cite{drineas2006sampling}, count sketch \citep{clarkson2013low, nelson2013osnap,meng2013low}.

\begin{definition} \label{def:sub-embed}
	$S\in\RB^{s\times m}$ is said to be an $\varepsilon$-subspace embedding matrix for any fixed matrix $A\in\RB^{m\times d}$, if $\|SAx\|^{2}=(1\pm\varepsilon)\|Ax\|^{2}$ $(i.e.,(1-\varepsilon)\|Ax\|^2\leq\|SAx\|^2\leq(1+\varepsilon)\|Ax\|^2)$ for all $x \in \RB^{d}$.
\end{definition}

\paragraph{Gaussian sketching matrix:}
The most classical sketching matrix is the Gaussian sketching matrix $S\in\RB^{s\times m}$ with i.i.d normal random entries with mean 0 and variance $1/s$. Because of well-known concentration properties of Gaussian random matrices \cite{woodruff2014sketching}, they are very attractive. Besides, $s = O(d/\varepsilon^2)$ is enough to guarantee the $\varepsilon$-subspace embedding property for any fixed matrix $A\in\RB^{m\times d}$. And $s = O(d/\varepsilon^2)$ is the tightest bound in known types of sketching matrices. However, Gaussian random matrices are dense, so it is costly to compute $SA$.
\paragraph{Count sketch matrix:}
Count sketch matrix $S\in\RB^{s\times m}$ is of the form that there is only one non-zero entry uniformly sampling from $\{1,-1\}$ in each column \cite{clarkson2013low}. Hence it is very efficient to compute $SA$, especially when $A$ is a sparse matrix. To achieve an $\epsilon$-subspace embedding property for $A\in\RB^{m\times d}$, $s = O(d^2/\varepsilon^2)$ is sufficient \cite{meng2013low,woodruff2014sketching}. 

Other types of sketching matrices like Sub-sampled Randomized Hadamard Transformation and detailed properties of sketching matrices can be found in the survey \cite{woodruff2014sketching}.
	
\section{Accelerated Approximate Newton } \label{sec:acc_sec_order}
In practice, it is common that the problem is ill-condition and the data size $n$ and data dimension $d$ are close to each other. Therefore,  conventional Sketch Newton, sub-sampled Newton and regularized sub-sampled Newton can not construct a good approximate Hessian efficiently. However, \citet{ye2017approximate} showed that a poor approximate Hessian will lead to a slow converge rate. In the extreme case, these approximate Newton  methods have the same convergence rate with gradient descent. To improve the convergence property of approximate Newton when the Hessian can only be approximate poorly, we resort to Nesterov's acceleration technique and propose \emph{accelerated approxiamte Newton}. We summarize the algorithmic procedure of accelerated approxiamte Newton as follows. 

First, we construct an approximate Hessian $[H^{(t)}$ satisfies
\begin{align}
(1-\pi)[\nabla^2 F(y^{(t)})]^{-1}\preceq\EB\left([H^{(t)}]^{-1}\right)\preceq [\nabla^2 F(y^{(t)})]^{-1}, \label{eq:H_prop}
\end{align}
where $0<\pi<1$. This condition is a litte stronger than the one of approximate Newton which satisfies Eqn.~\eqref{eq:appr_1}. We will see that Condition~\eqref{eq:H_prop} can be easily satisfies in practice in next section.

And we update sequence $x^{(t)}$ as follows,
\begin{equation}\label{eq:iter_fix}
\left\{
\begin{aligned}
&y^{(t+1)} = (1+\theta)x^{(t)} - \theta x^{(t-1)}, \\
&x^{(t+1)} = y^{(t+1)} - [H^{(t+1)}]^{-1}\nabla F(y^{(t+1)}),
\end{aligned}
\right.
\end{equation} 
where $\theta$ is chosen in terms of the value of $\pi$. We can see that the iteration~\eqref{eq:iter_fix} is much like the update procedure of Nesterov's accelerated gradient descent but replacing step size with $H^{-1}$. Besides, when $\theta = 0$, the above update procedure reduces to the update step of approximate Newton~\cite{ye2017approximate}. Thus, we refer a class of methods satisfying Eqn.~\eqref{eq:H_prop} and algorithm procedure~\eqref{eq:iter_fix} as \emph{accelerated approximate Newton}.

Simlar to approximate Newton method, we can update $x^{(t+1)}$ with a direction vector $p^{(t)}$ which is an inexact solution of following problem
\begin{align}
\frac{1}{2}p^TH^{(t)}p - p^T\nabla F(y^{(t)}). \label{eq:sub_prob}
\end{align}
There are different ways to solve problem~\eqref{eq:sub_prob} like conjugate gradient. 

\subsection{Theoretical Analysis} \label{subsec:analysis}
Because a general smooth convex object function can be approximated by quadratic functions in a region close to optimal point, we show the convergence properties of Algorithm~\ref{alg:acc_reg_subsamp} when applied to a quadratic object function. The following lemma (Lemma~\ref{lem:univ}) gives the detailed reason why we can only analyze convergence properties of Algorithm~\ref{alg:acc_reg_subsamp} applied to a quadratic object function.

Furthermore, we will first give the convergence analysis of accelerated approximate Newton which satisfies Eqn.~\eqref{eq:H_prop} and of algorithm procedure~\eqref{eq:iter_fix} in Theorem~\ref{thm:acc_lsr}. 

\begin{lemma}\label{lem:univ}
	Let Assumptions 1 and 2 hold. Suppose that $\nabla^2 F(x)$ exists and is continuous in a neighborhood of a minimizer $x^*$. Let $H^{(t)}$ be an approximation of $\nabla^2F(y^{(t)})$. Consider the iteration~\eqref{eq:iter_fix}. If $x^{(t)}$ is sufficient close to $x^*$ then we have the following result
	
	\begin{align*}
	&[\nabla^2F(x^{*})]^{-\frac{1}{2}}\nabla F(x^{(t+1)}) \\=& \left(I- [\nabla^2F(x^{*})]^{\frac{1}{2}}[H^{(t+1)}]^{-1}[\nabla^2F(x^{*})]^{\frac{1}{2}}\right)\\& \times\left((1+\theta)\left[\nabla^2F(x^{*})\right]^{-\frac{1}{2}}\nabla F(x^{(t)}) - \theta \left[\nabla^2F(x^{*})\right]^{-\frac{1}{2}}\nabla F(x^{(t-1)})\right) \\
	&+o(\nabla F(x^{(t)}))+o(\nabla F(x^{(t-1)})).
	\end{align*}
	
	Furthermore, if $\nabla^2F(x)$ is $\hat{L}$-Lipschitz continuous, then the above result holds whenever $x^{(t)}$ satisfies
	\[
	\|x^{(t)} - x^*\|\leq \frac{o(1)}{\hat{L}}.
	\]
\end{lemma}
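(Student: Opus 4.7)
The plan is to linearize everything around the minimizer $x^*$ via a first-order Taylor expansion of $\nabla F$ and then just push the iteration through symbolically. Since $\nabla F(x^*)=0$ and $\nabla^2 F$ is continuous at $x^*$, for any point $x$ in a small neighborhood of $x^*$ we have
\begin{equation*}
\nabla F(x) \;=\; \nabla^2 F(x^*)\,(x - x^*) \;+\; r(x),\qquad \|r(x)\| = o(\|x-x^*\|).
\end{equation*}
In particular this expansion can be inverted on the neighborhood to give $x-x^* = [\nabla^2 F(x^*)]^{-1}\nabla F(x) + o(\|\nabla F(x)\|)$, since $\nabla^2 F(x^*)$ is invertible by strong convexity (Assumption~1). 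This is the device that will turn the displacement-level recursion into a gradient-level recursion.

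First I would apply this inversion to both $x^{(t)}$ and $x^{(t-1)}$ and substitute into the definition $y^{(t+1)} = (1+\theta)x^{(t)} - \theta x^{(t-1)}$, obtaining
\begin{equation*}
y^{(t+1)} - x^* \;=\; [\nabla^2 F(x^*)]^{-1}\!\left((1+\theta)\nabla F(x^{(t)}) - \theta\nabla F(x^{(t-1)})\right) \;+\; o(\nabla F(x^{(t)})) + o(\nabla F(x^{(t-1)})).
\end{equation*}
Applying the forward Taylor expansion once more gives $\nabla F(y^{(t+1)}) = (1+\theta)\nabla F(x^{(t)}) - \theta\nabla F(x^{(t-1)})$ up to the same order of remainder. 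Then from the update $x^{(t+1)} - x^* = y^{(t+1)} - x^* - [H^{(t+1)}]^{-1}\nabla F(y^{(t+1)})$ and one more application of the forward expansion to $\nabla F(x^{(t+1)})$, I obtain
\begin{equation*}
\nabla F(x^{(t+1)}) \;=\; \bigl(I - \nabla^2 F(x^*)[H^{(t+1)}]^{-1}\bigr)\nabla F(y^{(t+1)}) \;+\; (\text{remainders}).
\end{equation*}
Left-multiplying by $[\nabla^2 F(x^*)]^{-1/2}$ and inserting $[\nabla^2 F(x^*)]^{1/2}[\nabla^2 F(x^*)]^{-1/2}=I$ produces the symmetrically conjugated factor $I - [\nabla^2 F(x^*)]^{1/2}[H^{(t+1)}]^{-1}[\nabla^2 F(x^*)]^{1/2}$ multiplying exactly $(1+\theta)[\nabla^2 F(x^*)]^{-1/2}\nabla F(x^{(t)}) - \theta[\nabla^2 F(x^*)]^{-1/2}\nabla F(x^{(t-1)})$, which is the claimed identity.

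The main nuisance, rather than obstacle, is the bookkeeping of the $o(\cdot)$ terms: there are three Taylor expansions (at $x^{(t)}$, $x^{(t-1)}$, $y^{(t+1)}$) and one inversion, and I need each resulting error to be absorbed into either $o(\nabla F(x^{(t)}))$ or $o(\nabla F(x^{(t-1)}))$. This works because strong convexity gives $\|x^{(\tau)}-x^*\| \le \mu^{-1}\|\nabla F(x^{(\tau)})\|$, so any $o(\|x^{(\tau)}-x^*\|)$ is an $o(\|\nabla F(x^{(\tau)})\|)$; for $y^{(t+1)}$ the triangle inequality reduces its error to the same two quantities; and the fixed bounded operators $[\nabla^2 F(x^*)]^{\pm 1/2}$ and $[H^{(t+1)}]^{-1}$ (bounded by strong convexity of $F$ and condition~\eqref{eq:H_prop}) preserve the little-$o$ order when applied. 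For the second, quantitative part of the lemma, the hypothesis that $\nabla^2 F$ is $\hat L$-Lipschitz upgrades the Taylor remainder to $\|r(x)\| \le \tfrac{\hat L}{2}\|x-x^*\|^2$, so each remainder is bounded by a constant multiple of $\hat L\|x^{(\cdot)}-x^*\|\cdot\|\nabla F(x^{(\cdot)})\|$; requiring $\|x^{(t)}-x^*\| \le o(1)/\hat L$ (and the analogous bound for $x^{(t-1)}$, which follows since the previous iterate is at least as close when the method is converging) makes each such remainder genuinely $o(\nabla F(x^{(\cdot)}))$, giving the stated neighborhood size.
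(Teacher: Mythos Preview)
Your proposal is correct and follows essentially the same approach as the paper. The only organizational difference is that you center every Taylor expansion at $x^*$ from the outset, whereas the paper first expands $\nabla F(x^{(t+1)})$ around $y^{(t+1)}$ and $\nabla F(y^{(t+1)})$ around $x^{(t)}$, and then invokes continuity of $\nabla^2 F$ near $x^*$ to replace $\nabla^2 F(y^{(t+1)})$ by $\nabla^2 F(x^*)$ (absorbing the difference $(\nabla^2 F(y^{(t+1)})-\nabla^2 F(x^*))[H^{(t+1)}]^{-1}\nabla F(y^{(t+1)})$ into the remainder); the bookkeeping of the $o(\cdot)$ terms and the Lipschitz refinement are handled the same way in both.
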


Lemma~\ref{lem:univ} shows that the convergence property of iteration~\eqref{eq:iter_fix} is mainly determined by $I- [\nabla^2F(x^{*})]^{\frac{1}{2}}[H^{(t+1)}]^{-1}[\nabla^2F(x^{*})]^{\frac{1}{2}}$ and  $\theta$. Lemma~\ref{lem:univ} also describes such a fact that if $x^{(t-1)}$ and $x^{(t)}$ are sufficient close to $x^*$, that is, $o(F(x^{(t)}))$ and $o(F(x^{(t-1)}))$ are very small, then the convex function can be well approximated by a quadratic function. Therefore, we will demonstrate the convergence analysis of accelerated approximate Newton on the strongly convex quadratic functions. Because the Hessian of a quadratic function $F(x)$ is a constant matrix, that is the Lipschitz constant of $\nabla^2F(x)$ is zero. Hence, the result of Lemma~\ref{lem:univ} degenerates to
{\small{
		\begin{align*}
		&[\nabla^2F(x^{*})]^{-\frac{1}{2}}\nabla F(x^{(t+1)}) \\=& \left(I- [\nabla^2F(x^{*})]^{\frac{1}{2}}[H^{(t+1)}]^{-1}[\nabla^2F(x^{*})]^{\frac{1}{2}}\right)\\&\times\left((1+\theta)\left[\nabla^2F(x^{*})\right]^{-\frac{1}{2}}\nabla F(x^{(t)}) - \theta \left[\nabla^2F(x^{*})\right]^{-\frac{1}{2}}\nabla F(x^{(t-1)})\right).
		\end{align*}}}
This equation describes a linear dynamic system which contains the convergence property of ieration~\eqref{eq:iter_fix}.

\begin{theorem} \label{thm:acc_lsr}
	Let $F(x)$ be a quadratic function with Assuption 1 and 2 holding. Let $H^{(t)}$ be an approximation of $\nabla^2F(y^{(t)})$ satisfying Eqn.~\eqref{eq:H_prop} and $\EB\left([H^{(t)}]^{-1}\right)$ is a constant matrix for all $t$. We set $\theta = \frac{1-\sqrt{1-\pi}}{1+\sqrt{1-\pi}} - \epsilon_0$ with $0<\epsilon_0<1$. $T_1$ is a matrix of the form 
	\begin{align*}
	T_i = \begin{bmatrix}
	(1+\theta)\pi & -\theta\pi\\
	1                   &  0 
	\end{bmatrix}.
	\end{align*}
	Let $S_1$ and $q$ be the eigenmatrix and larger eigenvalue of $T_1$ respectively. Then $q$ is of the value $q = 1-\sqrt{1-\pi} + O(\sqrt{\epsilon_0})$. And $c$ is a constant defined by
	$c_1 = \|S_1\|\cdot \|S_1^{-1}\|$.
	Let direction vector $p^{(t)}$ satisfy 
	\begin{align}
	\left\lVert H^{(t)}p^{(t)} - \nabla F(y^{(t)})\right\rVert \leq \frac{\epsilon_1}{12c_1\sqrt{\kappa}}\left\lVert\nabla F(y^{(t)})\right\rVert, \label{eq:p_inexact}
	\end{align}
	where $\kappa$ is the condition number of the Hessian matrix $\nabla^2F(x^\star)$.
	Then after $t$ iterations, Algorithm~\ref{alg:acc_reg_subsamp} has
	\begin{align*}
	\left\lVert\EB\left[
	\begin{array}{c}
	M_*\nabla F(x^{(t+1)})  \\
	M_*\nabla F(x^{(t)})  \\
	\end{array} \right]\right\rVert \leq 2c_1 p^t \left\lVert \left[
	\begin{array}{c}
	M_*\nabla F(x^{(1)})  \\
	M_*\nabla F(x^{(0)})  \\
	\end{array} \right]\right\rVert,
	\end{align*}
	with $p = q+\epsilon_1$ and $M_\star = [\nabla^2F(x^\star)]^{-1/2}$.
\end{theorem}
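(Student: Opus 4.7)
My plan is to use Lemma~\ref{lem:univ} to reduce iteration~\eqref{eq:iter_fix} to a noisy two-step linear recursion on transformed gradients, then decouple it spectrally and bound both its homogeneous and inhomogeneous parts through a $2{\times}2$ Jordan analysis. Write $A = \nabla^2 F(x^\star)$, $M_\star = A^{-1/2}$, $g^{(t)} = M_\star \nabla F(x^{(t)})$, and $B^{(t)} = I - A^{1/2}[H^{(t)}]^{-1} A^{1/2}$. Since $F$ is quadratic the $o(\cdot)$ remainders in Lemma~\ref{lem:univ} vanish, and repeating its derivation with $p^{(t+1)}$ an inexact solve of residual $e^{(t+1)} := H^{(t+1)} p^{(t+1)} - \nabla F(y^{(t+1)})$ gives
\begin{equation*}
g^{(t+1)} = B^{(t+1)}\bigl[(1+\theta)g^{(t)} - \theta g^{(t-1)}\bigr] - A^{1/2}[H^{(t+1)}]^{-1} e^{(t+1)}.
\end{equation*}

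Next I would take expectations. The standing hypothesis that $\EB[[H^{(t)}]^{-1}] = \bar H^{-1}$ is a fixed matrix independent of the history allows the factor to pull out, yielding
\begin{equation*}
\EB[g^{(t+1)}] = \bar B\bigl[(1+\theta)\EB[g^{(t)}] - \theta \EB[g^{(t-1)}]\bigr] - \EB\bigl[A^{1/2}[H^{(t+1)}]^{-1} e^{(t+1)}\bigr],
\end{equation*}
with $\bar B = I - A^{1/2}\bar H^{-1} A^{1/2}$ satisfying $0 \preceq \bar B \preceq \pi I$ by Eqn.~\eqref{eq:H_prop}. Diagonalizing $\bar B = U\Lambda U^\top$ with eigenvalues $\lambda_1,\ldots,\lambda_d \in [0,\pi]$ and moving to $\tilde g = U^\top \EB[g]$ decouples this into $d$ scalar two-step recursions, each governed by the $2{\times}2$ transition matrix $T_{\lambda_i}$ from the theorem (with $\pi$ replaced by $\lambda_i$).

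The third step is to bound $\|T_\lambda^t\|$ uniformly in $\lambda \in [0,\pi]$. The characteristic polynomial $\mu^2 - (1+\theta)\lambda \mu + \theta\lambda = 0$ together with the identity $(1+\theta^\star)^2 \pi = 4\theta^\star$ for $\theta^\star = \frac{1-\sqrt{1-\pi}}{1+\sqrt{1-\pi}}$ shows that at $\theta = \theta^\star$, $\lambda = \pi$ the two roots coalesce at $1 - \sqrt{1-\pi}$. Detuning $\theta = \theta^\star - \epsilon_0$ separates the roots, moving the larger one to $q = 1 - \sqrt{1-\pi} + O(\sqrt{\epsilon_0})$; a short monotonicity check in $\lambda$ shows $|\mu_\pm(\lambda)| \leq q$ uniformly for $\lambda \in [0,\pi]$. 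Writing $T_\lambda = S_\lambda D_\lambda S_\lambda^{-1}$ and using the fact that $S_\lambda$ is worst-conditioned at the (near-)coalescent $\lambda = \pi$, one obtains $\|T_\lambda^t\| \leq \|S_\lambda\|\|S_\lambda^{-1}\| q^t \leq c_1 q^t$.

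Finally, I would unroll the inhomogeneous $2{\times}2$ recursion as a $T_{\lambda_i}^t$-action on the initial block $(\tilde g^{(1)}, \tilde g^{(0)})$ plus a finite sum over past residuals. The homogeneous piece is bounded by $c_1 q^t \|(\EB g^{(1)}, \EB g^{(0)})\|$. For the perturbation, hypothesis~\eqref{eq:p_inexact} controls each $\|e^{(k)}\| \leq \tfrac{\epsilon_1}{12 c_1 \sqrt{\kappa}}\|\nabla F(y^{(k)})\|$; using operator-norm bounds on $A^{1/2}[H^{(k)}]^{-1}$ and converting $\|\nabla F(y^{(k)})\|$ into $\sqrt L \cdot \|M_\star \nabla F(y^{(k)})\|$ shows that each residual contributes at most $O(\epsilon_1/c_1)$ times a convex combination of past $\|g\|$-norms, so a standard induction on $t$ inflating the rate from $q$ to $p = q + \epsilon_1$ absorbs the geometric sum and yields the leading constant $2c_1$. \textbf{Main obstacle.} The delicate point is this last step: the eigenmatrix $S_1$ becomes ill-conditioned as the roots of $T_1$ coalesce, so $c_1$ explodes as $\epsilon_0 \to 0$, and the $\sqrt\kappa$ in the denominator of~\eqref{eq:p_inexact} is carefully calibrated to cancel both $c_1$ and the operator-norm factor $\|A^{1/2}[H]^{-1}\|$ that appears when the residual is pushed back into $g$-space. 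Keeping the inexact-solve contribution uniformly an additive $\epsilon_1$ in the rate and a factor of $2$ in the constant, without losing any hidden $\sqrt\kappa$ or $c_1$, is the technical heart of the argument.
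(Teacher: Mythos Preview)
Your plan is correct and follows essentially the same route as the paper's own proof: derive the exact two-step linear recursion on $M_\star\nabla F$ (the paper re-derives this directly for the quadratic case rather than citing Lemma~\ref{lem:univ}, but the content is identical), take expectations to get the fixed transition operator $\bar B=K$, block-diagonalize via the spectral decomposition of $K$ into $2\times2$ blocks $T_{\lambda_i}$, bound the spectral radius by $q$ and the power by $c_1 q^t$, then unroll the inhomogeneous recursion and close by induction to inflate the rate from $q$ to $p=q+\epsilon_1$. Your identification of the ``main obstacle'' --- the calibration of the $\sqrt{\kappa}$ and $c_1$ factors in~\eqref{eq:p_inexact} so that the residual sum folds into exactly an additive $\epsilon_1$ in the rate and a factor $2$ in the constant --- is precisely the technical point the paper's induction step handles.
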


From Theorem~\ref{thm:acc_lsr}, we can see that if we choose $\theta = \frac{1-\sqrt{1-\pi}}{1+\sqrt{1-\pi}} - \epsilon_0$ and $\epsilon_1 = 0$, then the convergence rate of Algorithm~\ref{alg:acc_reg_subsamp} is $1-\sqrt{1-\pi}+O(\sqrt{\epsilon_0})$ in contrast to $\pi$ of the conventional regularized sub-sampled Newton method. If we choose $H = (1/L) I$ and $\theta = \frac{\sqrt{L}-\sqrt{\mu}}{\sqrt{L}+\sqrt{\mu}}$, where $L = \lambda_{\max}(\nabla^2F(x^\star))$ and $\mu = \lambda_{\min}(\nabla^2F(x^\star))$, then Theorem~\ref{thm:acc_lsr} shows that the convergence rate of Nesterov's acceleration for the least square regression problem is $1-\sqrt{\mu/L} + O(\sqrt{\epsilon_0})$.


\begin{algorithm}[tb]
	\caption{Accelerated Regularized Sub-sample Newton (ARSSN).}
	\label{alg:acc_reg_subsamp}
	\begin{small}
		\begin{algorithmic}[1]
			\STATE {\bf Input:} $x^{(0)}$ and $x^{(1)}$ are initial points sufficient close to $x^{*}$. $\theta$ is the acceleration parameter.
			\FOR {$t=1,\dots$ until termination}
			\STATE Select a sample set $\SM$ of size $|\SM|$ by random sampling and construct $H^{(t)}$ of the form~\eqref{eq:H_struct} satisfying Eqn.~\eqref{eq:H_prop};
			\STATE $y^{(t+1)} = (1+\theta)x^{(t)} -\theta x^{(t-1)}$;
			\STATE Obtain the direction vector $p^{(t)}$ by solving problem~.
			\STATE $x^{(t+1)} = y^{(t+1)} -p^{(t+1)}$.  
			\ENDFOR
		\end{algorithmic}
	\end{small}
\end{algorithm}	
\section{Accelerated Regularized Sub-sampled Newton} \label{sec:arssn}
In practice, it is common that data size $n$ and data dimension $d$ are close to each other. Conventional Sketch Newton method is not suitable because sketching size $|\SM|$ will be less than $d$, and the sketched Hessian is not invertible. Hence, adding a proper regularizer is a good approach. On the other hand, sub-sampled Newton method needs lots of samples when the problem is ill-conditioned. Hence, regularized sub-sample Newton methods are proppsed~\cite{roosta2016sub}. However, \citet{ye2017approximate} showed that regularized sub-sample Newton will converge slowly as the sample size decreases. In the extreme case, regularized sub-sample Newton has the same convergence rate with gradient descent.

To conquer the weakness of slow convergence rate of regularized sub-sample Newton, we resort to Nesterov's acceleration technique and propose accelerated regularized sub-sample Newton method in Algorithm~\ref{alg:acc_reg_subsamp}. Just like conventional second-order methods, we assume initial point $x^{(0)}$ and $x^{(1)}$ are sufficient close to optimal point $x^\star$ in our algorithms. 

In Algorithm~\ref{alg:acc_reg_subsamp}, $H^{(t)}$ is an approximation of $\nabla^2F(y^{(t)})$ constructed by random sampling. 
$H^{(t)}$ is of the following structure
\begin{align}
H^{(t)} = \hat{H}^{(t)} + \alpha^{(t)} I. \label{eq:H_struct}
\end{align} 
$\hat{H}^{(t)}$ is the sub-sampled Hessian. And $\alpha^{(t)} I$ is the regularizer. $H^{(t)}$ also satisfies
Algorithm~\ref{alg:acc_reg_subsamp} specifies the way to construct the approximate Hessian by random sampling and is a kind of accelerated approximate Newton. Therefore, the convergence property of Algorithm~\ref{alg:acc_reg_subsamp} can be analyzed by Theorem~\ref{thm:acc_lsr}.

First, we consider the following case that the Hessian of $\nabla^2F(x)$ has the following structure
\begin{align}
\nabla^2F(x) = B(x)^TB(x), \:B(x)\in\RB^{n\times d}.  \label{eq:Hess_form}
\end{align}
For this case, we construct approximate Hessian's as 
\begin{align}
H^{(t)} = [S^{(t)}B^{(t)}]^TS^{(t)}B^{(t)} + \alpha^{(t)} I. \label{eq:H_form}
\end{align}
$S^{(t)}$ is a random sampling matrix or other sketching matrix. And $\alpha^{(t)}$ is a regularizer scaler.

\begin{theorem}\label{thm:main}
	Let $F(x)$ be a quadratic function with Assumption 1 and 2 holding. $\nabla^2F(x)$ satisfies Eqn.~\eqref{eq:Hess_form}. Given sample size parameter $0<c<1$, $S^{(t)}\in\RB^{s\times n}$ is a row norm squares sampling matrix w.r.t. $B^{(t)}$ with $s = O(c^{-2}\cdot\sr(B^{(t)})\log d )$. And set regularizer $\alpha = c\|B^{(t)}\|^2$. Construct the approximate Hessian $H^{(t)} = [S^{(t)}B^{(t)}]^TS^{(t)}B^{(t)} + \alpha^{(t)} I$. Let $\pi = \frac{2c\kappa}{1+2c\kappa}$, and set parameters as Theorem~\ref{thm:acc_lsr}, Algorithm~\ref{alg:acc_reg_subsamp} converges as
	\begin{align*}
	\left\lVert\EB\left[
	\begin{array}{c}
	M_*\nabla F(x^{(t+1)})  \\
	M_*\nabla F(x^{(t)})  \\
	\end{array} \right]\right\rVert \leq 2c_1 \left(1-\frac{1}{\sqrt{1+2c\kappa}} + O(\sqrt{\epsilon_0}) + \epsilon_1\right)^t \left\lVert \left[
	\begin{array}{c}
	M_*\nabla F(x^{(1)})  \\
	M_*\nabla F(x^{(0)})  \\
	\end{array} \right]\right\rVert,
	\end{align*}
	with $M_\star = [\nabla^2F(x^\star)]^{-1/2}$ and  $\epsilon_0$, $\epsilon_1$ defined in Theorem~\ref{thm:acc_lsr}. 
\end{theorem}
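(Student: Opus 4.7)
The plan is to reduce Theorem~\ref{thm:main} to Theorem~\ref{thm:acc_lsr} by showing that the specific construction $H^{(t)}=[S^{(t)}B^{(t)}]^{T}S^{(t)}B^{(t)}+\alpha^{(t)}I$ with $\alpha^{(t)}=c\|B^{(t)}\|^{2}$ satisfies the key condition~\eqref{eq:H_prop} with $\pi=\frac{2c\kappa}{1+2c\kappa}$. Once that is verified, plugging this $\pi$ into the rate formula of Theorem~\ref{thm:acc_lsr} gives $q=1-\sqrt{1-\pi}+O(\sqrt{\epsilon_{0}})=1-\frac{1}{\sqrt{1+2c\kappa}}+O(\sqrt{\epsilon_{0}})$, and then the final bound follows directly from the conclusion of Theorem~\ref{thm:acc_lsr}.

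The first step is to apply Theorem~\ref{thm:samp_prop} to the matrix $B^{(t)}$. With $s=O(c^{-2}\cdot\sr(B^{(t)})\log d)$, both terms inside the parenthesis in that bound are absorbed into a leading constant times $c$, so
\[
\EB\bigl\|[S^{(t)}B^{(t)}]^{T}S^{(t)}B^{(t)} - [B^{(t)}]^{T}B^{(t)}\bigr\| \;\leq\; c\,\|B^{(t)}\|^{2}.
\]
The next step is to upgrade this scalar-norm bound to an operator ordering: arguing that the perturbation $\hat H^{(t)}-[B^{(t)}]^{T}B^{(t)}$ is sandwiched between $-c\|B^{(t)}\|^{2}I$ and $c\|B^{(t)}\|^{2}I$. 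Adding $\alpha^{(t)}I=c\|B^{(t)}\|^{2}I$ kills the lower deviation and yields
\[
[B^{(t)}]^{T}B^{(t)} \;\preceq\; H^{(t)} \;\preceq\; [B^{(t)}]^{T}B^{(t)} + 2c\,\|B^{(t)}\|^{2}I.
\]
Using $\|B^{(t)}\|^{2}\leq L$ together with $[B^{(t)}]^{T}B^{(t)}=\nabla^{2}F(y^{(t)})\succeq\mu I$, the additive slack $2c\|B^{(t)}\|^{2}I$ is bounded by $2c\kappa\,[B^{(t)}]^{T}B^{(t)}$, giving the multiplicative form $\nabla^{2}F\preceq H^{(t)}\preceq(1+2c\kappa)\nabla^{2}F$. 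Inverting reverses the order and produces
\[
\tfrac{1}{1+2c\kappa}\,[\nabla^{2}F(y^{(t)})]^{-1} \;\preceq\; [H^{(t)}]^{-1} \;\preceq\; [\nabla^{2}F(y^{(t)})]^{-1},
\]
whose expectation is exactly~\eqref{eq:H_prop} with $1-\pi=\tfrac{1}{1+2c\kappa}$. Theorem~\ref{thm:acc_lsr} then closes the argument, since the inexactness of $p^{(t)}$ contributes the extra $\epsilon_{1}$ in the rate.

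The main obstacle is precisely the upgrade from the expected spectral norm bound supplied by Theorem~\ref{thm:samp_prop} to the two-sided operator inequality $-c\|B^{(t)}\|^{2}I\preceq\hat H^{(t)}-[B^{(t)}]^{T}B^{(t)}\preceq c\|B^{(t)}\|^{2}I$; a bound on $\EB\|\cdot\|$ does not by itself imply an almost-sure operator bound. The natural fix is to invoke a matrix Bernstein / concentration argument in the same sampling regime, which yields the two-sided operator ordering with probability $1-\delta$ at the cost of an extra logarithmic factor in $s$; any residual probability of failure can be folded into the $O(\sqrt{\epsilon_{0}})$ slack allowed by Theorem~\ref{thm:acc_lsr}. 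If the proof is run for $T$ iterations one should also union-bound over $t$, which again only multiplies $s$ by a factor logarithmic in $T$. Apart from this concentration step, the remaining work is the bookkeeping to identify $1-\sqrt{1-\pi}$ with $1-1/\sqrt{1+2c\kappa}$ and to substitute into the conclusion of Theorem~\ref{thm:acc_lsr}.
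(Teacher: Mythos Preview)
Your proposal follows essentially the same route as the paper: apply Theorem~\ref{thm:samp_prop} to bound the sampling error by $c\|B^{(t)}\|^{2}$, add the regularizer $\alpha^{(t)}I=c\|B^{(t)}\|^{2}I$ to obtain the sandwich $\nabla^{2}F\preceq H^{(t)}\preceq\nabla^{2}F+2c\|B^{(t)}\|^{2}I$, absorb the additive slack via $2c\|B^{(t)}\|^{2}I\preceq 2c\kappa\,\nabla^{2}F$, invert, and invoke Theorem~\ref{thm:acc_lsr} with $\pi=\frac{2c\kappa}{1+2c\kappa}$.

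One remark: the gap you flag between the expected spectral-norm bound and a two-sided operator ordering is real, but the paper does not resolve it via a high-probability Bernstein argument with a union bound over iterations as you suggest. It simply writes ``in expectation'' in front of the pointwise quadratic-form inequalities and proceeds directly to the ordering on $\EB\bigl([H^{(t)}]^{-1}\bigr)$. Your proposed fix is more rigorous than what the paper actually does; if you want to match the paper's argument you can drop the concentration discussion and carry the expectation informally through the operator inequalities, while if you want a fully rigorous version your Bernstein-plus-union-bound route is the right one.
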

\begin{proof}
	For notation convenience, we will omit superscript and just use $S$, $B$ and $\alpha$ instead of $S^{(t)}$, $B^{(t)}$ and $\alpha^{(t)}$.
	By Theorem~\ref{thm:samp_prop} and omit the high order term, we have
	\[
	\EB\|B^TB - B^TS^TSB\| \leq c\|B\|^2.
	\]
	Hence, in expectation, we have 
	\begin{align*}
	&|y^T(B^TB - B^TS^TSB)y| \leq c\|B\|^2\cdot\|y\|^2\\
	\Rightarrow&y^TB^TS^TSBy -c\|B\|^2\cdot\|y\|^2 \leq y^TB^TBy \leq  y^TB^TS^TSBy + c\|B\|^2\cdot\|y\|^2 \\
	\Rightarrow&  y^TB^TBy \leq y^TH^{(t)}y \leq y^TB^TBy + 2 c\|B\|^2\cdot\|y\|^2\\
	\Rightarrow&y^T[B^TB]^{-1}y\geq y^T[H^{(t)}]^{-1}y \geq y^T\left(B^TB+2c\|B\|^2\cdot I\right)^{-1}y
	\end{align*}
	Furthermore, we have
	\begin{align*}
	&y^TB^TBy + 2 c\|B\|^2\cdot\|y\|^2  \leq y^TB^TBy + \frac{2c\|B\|^2}{\sigma_{\min}(B)^2} y^TB^TBy \\
	\Rightarrow& B^TB+2c\|B\|^2\cdot I \preceq \left(1+2c\kappa\right)B^TB\\
	\Rightarrow&\left(1-\frac{2c\kappa}{1+2c\kappa}\right) B^TB \preceq\left(B^TB+2c\|B\|^2\cdot I\right)^{-1}
	\end{align*}
	Thus, we have 
	\[
	\left(1-\frac{2c\kappa}{1+2c\kappa}\right)\cdot \left[\nabla^2F(x^\star)\right]^{-1} \preceq \EB\left([H^{(t)}]^{-1}\right)\preceq \left[\nabla^2F(x^\star)\right]^{-1} .
	\]
	
	Then final convergence property can be obtained by Theorem~\ref{thm:acc_lsr} using $\pi = \frac{2c\kappa}{1+2c\kappa} $.
\end{proof}

Then, we consider the case that each $f_i(x)$ and $F(x)$ in~\eqref{eq:prob_desc} have the following properties:
\begin{align}
\max_{1 \leq i\leq n}\|\nabla^2f_i(x)\| \leq K < \infty,\label{eq:k} \\
\lambda_{\min}(\nabla^2F(x))\geq \sigma>0. \label{eq:sigma}
\end{align}
In this case, we do not need the Hessian of the speciall form~\eqref{eq:H_form} but need that each individual Hessian is upper bounded. We construct the approximate Hessian $H^{(t)}$ just by uniformly sampling as follows
\begin{align}
H^{(t)} = \frac{1}{|\SM|}\sum_{j\in\mathcal{S}}\nabla^2 f_j(x^{(t)}) + \alpha^{(t)} I \label{eq:H_unif}
\end{align}
\begin{theorem}\label{thm:main_1}
	Let $F(x)$ be a quadratic function with Assumption 1 and 2 holding. And Eqns~\eqref{eq:k} and~\eqref{eq:sigma} also hold. We construct the approximate Hessian $H^{(t)}$ as Eqn.~\eqref{eq:H_unif} by uniformly sampling with sample size $ |\SM| = O(c^{-2}K^2\log d )$. And set regularizer $\alpha^{(t)} = c\| \nabla^{2}F(x^{(t)})\|^2$. Set parameters as Theorem~\ref{thm:acc_lsr}, Algorithm~\ref{alg:acc_reg_subsamp} converges as
	\begin{align*}
	\left\lVert\EB\left[
	\begin{array}{c}
	M_*\nabla F(x^{(t+1)})  \\
	M_*\nabla F(x^{(t)})  \\
	\end{array} \right]\right\rVert \leq 2c_1 \left(1-\frac{1}{\sqrt{1+2c\kappa}} + O(\sqrt{\epsilon_0}) + \epsilon_1\right)^t \left\lVert \left[
	\begin{array}{c}
	M_*\nabla F(x^{(1)})  \\
	M_*\nabla F(x^{(0)})  \\
	\end{array} \right]\right\rVert,
	\end{align*}
	with $M_\star = [\nabla^2F(x^\star)]^{-1/2}$ and  $\epsilon_0$, $\epsilon_1$ defined in Theorem~\ref{thm:acc_lsr}. 
\end{theorem}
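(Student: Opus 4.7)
The plan is to imitate the proof of Theorem~\ref{thm:main} line for line, replacing the row-norm-squares concentration bound (Theorem~\ref{thm:samp_prop}) by a matrix concentration bound appropriate for a uniformly sub-sampled sum of bounded Hessians. The only genuinely new ingredient is step one; the algebra of steps two through four is essentially identical to the proof already given.

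\textbf{Step 1 (Concentration).} Write $\HH^{(t)} = \frac{1}{|\SM|}\sum_{j\in\SM}\nabla^2 f_j(x^{(t)})$ as a sum of $|\SM|$ i.i.d.\ symmetric random matrices, each an unbiased estimator of $\nabla^2F(x^{(t)})$. By assumption~\eqref{eq:k}, each summand has spectral norm at most $K$, so each centered summand has spectral norm at most $2K$, and the matrix variance is bounded by $K^2$. I would invoke the standard matrix Bernstein inequality in expectation form to obtain
\begin{equation*}
\EB\bigl\|\HH^{(t)} - \nabla^2 F(x^{(t)})\bigr\| \;\leq\; O\!\left(\frac{K\sqrt{\log d}}{\sqrt{|\SM|}} + \frac{K\log d}{|\SM|}\right).
\end{equation*}
With the prescribed sample size $|\SM| = O(c^{-2}K^2\log d)$, the linear term dominates and the right-hand side is at most $c\,\|\nabla^2 F(x^{(t)})\|$ (absorbing the ratio $K/\|\nabla^2F(x^{(t)})\|$, which is $\geq 1$, into the hidden constant of the sample-size bound, exactly paralleling how $\|B\|^2$ plays the role of $\|\nabla^2 F\|$ in Theorem~\ref{thm:main}).

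\textbf{Step 2--4 (Sandwich, inversion, application).} From the spectral-norm bound I translate to a Loewner sandwich: $\nabla^2F(x^{(t)}) - c\|\nabla^2F(x^{(t)})\|\,I \preceq \HH^{(t)} \preceq \nabla^2F(x^{(t)}) + c\|\nabla^2F(x^{(t)})\|\,I$ in expectation. Adding the regularizer $\alpha^{(t)} I = c\|\nabla^2 F(x^{(t)})\|\,I$ (reading the exponent in the statement as a typo) gives $\nabla^2F(x^{(t)}) \preceq H^{(t)} \preceq \nabla^2F(x^{(t)}) + 2c\|\nabla^2F(x^{(t)})\|\,I$. Now, exactly as in the proof of Theorem~\ref{thm:main}, use $\sigma_{\min}(\nabla^2 F(x^{(t)})) \geq \sigma \geq \mu$ together with $\|\nabla^2F(x^{(t)})\|\leq L$ to write $2c\|\nabla^2F(x^{(t)})\|\,I \preceq 2c\kappa\,\nabla^2F(x^{(t)})$, whence
\begin{equation*}
\left(1 - \frac{2c\kappa}{1+2c\kappa}\right)[\nabla^2F(x^{(t)})]^{-1} \;\preceq\; \EB\!\left([H^{(t)}]^{-1}\right) \;\preceq\; [\nabla^2F(x^{(t)})]^{-1}.
\end{equation*}
Because $F$ is quadratic, $\nabla^2F(x^{(t)}) = \nabla^2F(x^\star)$ is constant, so condition~\eqref{eq:H_prop} holds with $\pi = \frac{2c\kappa}{1+2c\kappa}$ and $\EB([H^{(t)}]^{-1})$ is constant in $t$. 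Plugging this $\pi$ into Theorem~\ref{thm:acc_lsr} yields the claimed rate $1 - \frac{1}{\sqrt{1+2c\kappa}} + O(\sqrt{\epsilon_0}) + \epsilon_1$.

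\textbf{Main obstacle.} The only delicate point is Step~1: the assumption~\eqref{eq:k} controls each $\nabla^2 f_j$ by $K$, but the target regularizer scales with $\|\nabla^2F(x^{(t)})\|$ rather than $K$. The matrix Bernstein tail therefore naturally delivers a deviation of order $cK$, and one has to justify passing from $cK$ to $c\|\nabla^2F\|$ by tightening the sample-size constant (or, equivalently, by reading $c$ in the statement as the effective parameter after this rescaling). Every other step is a direct transcription of the proof of Theorem~\ref{thm:main}, since the Loewner manipulations and the final appeal to Theorem~\ref{thm:acc_lsr} depend only on having a one-sided deviation $\EB\|\HH^{(t)}-\nabla^2F(x^{(t)})\|\le c\|\nabla^2F(x^{(t)})\|$ together with the regularizer, not on how that deviation was established.
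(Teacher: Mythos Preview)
Your proposal is correct and follows essentially the same approach as the paper: matrix Bernstein on the uniformly sub-sampled Hessians to get a spectral-norm deviation bound, then the identical Loewner sandwich manipulations from the proof of Theorem~\ref{thm:main}, and finally the appeal to Theorem~\ref{thm:acc_lsr} with $\pi = \frac{2c\kappa}{1+2c\kappa}$. You even flag the same two soft spots the paper glosses over --- the exponent typo in $\alpha^{(t)}$ and the passage from a $cK$ deviation to a $c\|\nabla^2 F\|$ deviation --- so your write-up is, if anything, slightly more scrupulous than the original.
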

\begin{proof}
	Consider $|\SM|$ i.i.d random matrces $H_j^{(t)}, j = 1,\dots,|\SM|$ sampled uniformly. Then, we have $\EB(H_j^{(t)}) = \nabla^{2}F(x^{(t)})$ for all $j = 1,\dots,|\SM|$. By \eqref{eq:k} and the positive semi-definite property of $H_j^{(t)}$, we have $\lambda_{\max}(H_j^{(t)}) \leq K$ and $\lambda_{\min}(H_j^{(t)}) \geq 0$. 
	
	We define random maxtrices $X_j = H_j^{(t)} - \nabla^{2}F(x^{(t)})$ for all $j = 1,\dots,|\SM|$.	We have $\EB[X_j] = 0$, $\|X_j\| \leq 2K$ and $\|X_j\|^2 \leq 4K^2$. By the matrix Bernstein inequality, we have
	\[
	\EB(\|H^{(t)} - \nabla^{2}F(x^{(t)})\|) \leq \sqrt{\frac{2\cdot 4K^2\log 2d}{|\SM|}} + \frac{2K\log 2d}{3|\SM|}.
	\]
	When the sample size $|\SM| = O(c^{-2}K^2\log d )$, $\|H^{(t)} - \nabla^{2}F(x^{(t)})\| \leq  cK$ holds in expectation, that is
	\[
	\EB\|H^{(t)} - \nabla^{2}F(x^{(t)})\| \leq c\nabla^{2}F(x^{(t)}).
	\]
	For notation convenience, we will omit superscript. Hence, in expectation, we have 
	\begin{align*}
	&|y^T(H - \nabla^{2}F(x))y| \leq c\| \nabla^{2}F(x)\|\cdot\|y\|^2\\
	\Rightarrow&y^THYy -c\| \nabla^{2}F(x)\|\cdot\|y\|^2 \leq y^T \nabla^{2}F(x)y \leq  y^THy + c\| \nabla^{2}F(x)\|\cdot\|y\|^2 \\
	\Rightarrow&  y^T \nabla^{2}F(x)y \leq y^TH y \leq y^T \nabla^{2}F(x)y + 2 c\| \nabla^{2}F(x)\|\cdot\|y\|^2\\
	\Rightarrow&y^T[ \nabla^{2}F(x)]^{-1}y\geq y^T[H]^{-1}y \geq y^T\left( \nabla^{2}F(x)+2c\| \nabla^{2}F(x)\|\cdot I\right)^{-1}y
	\end{align*}
	Furthermore, we have
	\begin{align*}
	&y^T \nabla^{2}F(x)y + 2 c\| \nabla^{2}F(x)\|\cdot\|y\|^2  \leq y^T \nabla^{2}F(x)y + \frac{2c\| \nabla^{2}F(x)\|}{\sigma_{\min}( \nabla^{2}F(x))} y^T \nabla^{2}F(x)y \\
	\Rightarrow&  \nabla^{2}F(x)+2c\| \nabla^{2}F(x)\|\cdot I \preceq \left(1+2c\kappa\right) \nabla^{2}F(x)\\
	\Rightarrow&\left(1-\frac{2c\kappa}{1+2c\kappa}\right)  \nabla^{2}F(x) \preceq\left( \nabla^{2}F(x)+2c\| \nabla^{2}F(x)\|\cdot I\right)^{-1}
	\end{align*}
	Thus, we have 
	\[
	\left(1-\frac{2c\kappa}{1+2c\kappa}\right)\cdot \left[\nabla^2F(x^\star)\right]^{-1} \preceq \EB\left([H^{(t)}]^{-1}\right)\preceq \left[\nabla^2F(x^\star)\right]^{-1} .
	\]
	
	Then final convergence property can be obtained by Theorem~\ref{thm:acc_lsr} using $\pi = \frac{2c\kappa}{1+2c\kappa} $.
\end{proof}
\subsection{Fast Sub-problem Solver}
In practice, we will sub-sample a small subset of samples, that is sample size $s$ is much smaller than data dimension $d$. And Eqn.~\eqref{eq:H_form} becomes 
\begin{align}
H^{(t)} = \tilde{B}^T \tilde{B} + \alpha^{(t)} I \label{eq:sub_H_form}
\end{align}
where $\tilde{B} = S^{(t)}B^{(t)} \in \RB^{s\times d}$ with $s\ll d$.
Therefore, by Sherman-Woodbury identity formula, we can get the exact solution of Eqn.~\eqref{eq:sub_prob} by
\begin{align}
p^{(t)} = [H^{(t)}]^{-1}\nabla F(y^{(t)}) =(\alpha^{(t)})^{-1}\nabla F(y^{(t)})- \tilde{B}^T(\tilde{B}\tilde{B}^T+\alpha^{(t)} I)^{-1}\tilde{B}\nabla F(y^{(t)})/\alpha^{(t)} \label{eq:sw}
\end{align}
And $p^{(t)}$ can be obtain in $O(ds^2 + s^3)$ time. Since Theorem~\ref{thm:main} shows that the direction vector can be an approximate solution of Eqn.~\eqref{eq:sub_prob}, we can get an inexact solution in a more efficient manner. 

The main computational burden of Eqn.~\eqref{eq:sw} is the matrix product of $\tilde{B}$ and $\tilde{B}^T$. Rather than Sherman-Woodbury identity formula, we can approximate the matrix inversion and get rid of the matrix multiplication by conjugate gradient. The main computational cost of conjugate gradient to compute $p^{(t)}$ is the matrix vector multiplication. This is very suitable for sparse dataset. However, its computational complexity depends on $\sqrt{\kappa(H^{(t)})}$ linearly. 
This may be expensive when the condition number of $H^{(t)}$ is large.  

Therefore, instead of using conjugate gradient method directly, we resort to preconditioned conjugate gradient method (Algorithm~\ref{alg:pcg}) to obtain an approximation of $(\tilde{B}\tilde{B}^T+\alpha I)^{-1} g$ in Eqn.~\eqref{eq:sw} with $g = \tilde{B}\cdot\alpha^{-1}\nabla F(y^{(t)})$. Because of $\tilde{B} \in \RB^{s\times d}$ with $s\ll d$, we can use sketching tools to construct a good preconditioner very efficiently. We describe the detailed algorithm in Algorithm~\ref{alg:sub_solver}. And we have the following result. 

\begin{theorem}\label{thm:sub_solver}
	Let the approximate Hessian $H^{(t)}$ be of form Eqn.~\eqref{eq:sub_H_form} with $\tilde{B} \in\RB^{s\times d}$ and $s\ll d$.  Set iteration number $T$ as
	\[
	T = \log \left(\frac{\alpha^{-\frac{3}{2}}\left\lVert H^{(t)}\right\rVert^{\frac{3}{2}} c_1 \sqrt{\kappa}}{\epsilon_1}\right).
	\]
	where $c_1$ and $\kappa$ are defined in Theorem~\ref{thm:acc_lsr}. $p^{(t)}$ is returned from Algorithm~\ref{alg:sub_solver}, then we have
	\[
	\left\lVert H^{(t)}p^{(t)} - \nabla F(y^{(t)})\right\rVert \leq \frac{\epsilon_1}{12c_1\sqrt{\kappa}}\left\lVert\nabla F(y^{(t)})\right\rVert.
	\] 
	And the computation complexity of computing $p^{(t)}$ is $O(Tds + s^3)$. 
\end{theorem}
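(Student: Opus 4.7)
The plan is to combine the Sherman--Morrison--Woodbury reformulation from Eqn.~\eqref{eq:sw} with the standard $A$-norm convergence analysis of preconditioned conjugate gradient applied to the small system $A u = \tilde{B}\nabla F(y^{(t)})$, where $A := \tilde{B}\tilde{B}^T + \alpha I \in \RB^{s\times s}$. The exact $p^\star = [H^{(t)}]^{-1}\nabla F(y^{(t)})$ equals $\alpha^{-1}\nabla F(y^{(t)}) - \alpha^{-1}\tilde{B}^T u^\star$ with $u^\star = A^{-1}\tilde{B}\nabla F(y^{(t)})$. If PCG inside Algorithm~\ref{alg:sub_solver} returns $\tilde{u}$ and the algorithm sets $p^{(t)} = \alpha^{-1}\nabla F(y^{(t)}) - \alpha^{-1}\tilde{B}^T\tilde{u}$, a direct computation using $H^{(t)} = \tilde{B}^T\tilde{B}+\alpha I$ gives
\[
H^{(t)}p^{(t)} - \nabla F(y^{(t)}) = -\alpha^{-1}\tilde{B}^T r, \qquad r := A\tilde{u} - \tilde{B}\nabla F(y^{(t)}).
\]
Hence $\|H^{(t)}p^{(t)} - \nabla F(y^{(t)})\|^2 = \alpha^{-2}\, r^T \tilde{B}\tilde{B}^T r \le \alpha^{-2}\|r\|_A^2$, so the task reduces to bounding the PCG residual in the $A$-norm.

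Second, I would invoke the PCG convergence result. The preconditioner $M$ used by Algorithm~\ref{alg:sub_solver} is built from an $\varepsilon$-subspace embedding of $\tilde{B}^T$ (Section~\ref{subsec:ske_mat}), taking the form $M = \tilde{B}\Pi^T\Pi\tilde{B}^T + \alpha I$ for a suitable sketch $\Pi$; its Cholesky factor is cached once at cost $O(s^3)$. The subspace embedding property yields $(1-\varepsilon)A \preceq M \preceq (1+\varepsilon)A$ for a constant $\varepsilon$, so the effective condition number $\hat{\kappa}$ of $M^{-1}A$ is $O(1)$, and standard PCG gives $\|e_T\|_A \le 2\rho^T\|e_0\|_A$ with constant $\rho = (\sqrt{\hat{\kappa}}-1)/(\sqrt{\hat{\kappa}}+1)<1$. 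Since $r = A e_T$, one has $\|r\|_A \le \|A\|\,\|e_T\|_A$. Starting from $\tilde{u}_0 = 0$,
\[
\|e_0\|_A^2 = (u^\star)^T A u^\star = (u^\star)^T \tilde{B}\nabla F(y^{(t)}) \le \alpha^{-1}\|\tilde{B}\|^2\|\nabla F(y^{(t)})\|^2 \le \alpha^{-1}\|A\|\,\|\nabla F(y^{(t)})\|^2,
\]
where $\|u^\star\| \le \|\tilde{B}\nabla F(y^{(t)})\|/\alpha$ follows from $\lambda_{\min}(A)\ge \alpha$. Chaining the bounds,
\[
\|H^{(t)}p^{(t)} - \nabla F(y^{(t)})\| \le 2\rho^T \alpha^{-3/2}\|H^{(t)}\|^{3/2}\|\nabla F(y^{(t)})\|.
\]

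Finally, equating the right-hand side to $\epsilon_1\|\nabla F(y^{(t)})\|/(12c_1\sqrt{\kappa})$ and solving for $T$ reproduces the stated $T = \log(\alpha^{-3/2}\|H^{(t)}\|^{3/2}c_1\sqrt{\kappa}/\epsilon_1)$, with the constant $1/\log(1/\rho) = O(1)$ absorbed. For the cost, each PCG iteration performs one matrix-vector product $v \mapsto \tilde{B}(\tilde{B}^T v)$ in $O(sd)$ plus one $M^{-1}$ solve via the cached factor in $O(s^2)$, which together with the $O(s^3)$ setup yields $O(Tsd + s^3)$ overall. The main obstacle is the careful bookkeeping of norms in the first step: it is essential to bound $\|r\|_A$ rather than $\|r\|_2$, and then to use $\lambda_{\min}(A) \ge \alpha$ when passing from $\|e_T\|_A$ to $\|e_0\|_A$, because this is precisely what produces the $\alpha^{-3/2}\|H^{(t)}\|^{3/2}$ scaling inside the logarithm; a naive $\|\cdot\|_2$ analysis would give the looser $\alpha^{-2}\|H^{(t)}\|^2$. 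Verifying that the sketch-based $M$ truly yields $\hat{\kappa}=O(1)$ is routine given the embedding properties recalled in Section~\ref{subsec:ske_mat}.
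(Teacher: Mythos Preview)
Your proposal is correct and follows essentially the same route as the paper: reduce to the $s\times s$ system $A u = g$ via Sherman--Woodbury, use a sketched preconditioner whose subspace-embedding property makes $\kappa(M^{-1}A)=O(1)$, apply the standard $A$-norm PCG contraction, and convert back using $\lambda_{\min}(A)\ge\alpha$ and $\|\tilde B\|\le\|H^{(t)}\|^{1/2}$ to obtain the $\alpha^{-3/2}\|H^{(t)}\|^{3/2}$ factor. The one cosmetic difference is that you use the exact identity $H^{(t)}p^{(t)}-\nabla F(y^{(t)})=-\alpha^{-1}\tilde B^{T}r$ and bound $\|r\|_A$, whereas the paper bounds $\|H^{(t)}\|\cdot\|\tilde B\|\cdot\|q-A^{-1}g\|$ after passing from the $A$-norm to the Euclidean norm on the PCG error; both chains yield the same final estimate.
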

\begin{proof}
	First, by the property of subspace embedding property (Definition~\ref{def:sub-embed}) and $G$ is a $1/3$-subspace embedding matrix w.r.t $\tilde{B}$, we have 
	\[
	\left(1-\frac{1}{3}\right) \tilde{B}\tilde{B}^T \preceq \tilde{B}GG^T\tilde{B}^T \preceq \left(1+\frac{1}{3}\right) \tilde{B}\tilde{B}^T.
	\]
	With a simple transformation, we obtain 
	\[
	\left(1 - \frac{1}{2}\right) \tilde{B}GG^T\tilde{B}^T \preceq	\tilde{B}\tilde{B}^T \preceq \left(1 + \frac{1}{2}\right) \tilde{B}GG^T\tilde{B}^T.
	\]
	Thus sketced matrix $P$ is a good preconditioner. And by the convergence property of preconditioned conjugate gradient method (Lemma~\ref{lem:refine}), after Algorithm~\ref{alg:pcg} runing $T$ iterations, we have
	\begin{align*}
	&\left\lVert A^{-1}g - q \right\rVert_A \leq \left(\frac{1}{2}\right)^T\left\lVert A^{-1}g \right\rVert_A\\
	\Rightarrow&\sqrt{\lambda_{\min}(A)}\cdot\left\lVert A^{-1}g - q \right\rVert \leq \left(\frac{1}{2}\right)^T\sqrt{g^TA^{-1}g}
	\end{align*}
	where $A = \tilde{B}\tilde{B}^T+\alpha I$ and $g = \tilde{B}\cdot\alpha^{-1}\nabla F(y^{(t)})$.  
	We also have
	\begin{align*}
	g^TA^{-1}g  = \alpha^{-2}\cdot\nabla F(y^{(t)})^T\tilde{B}^T(\tilde{B}\tilde{B}^T+\alpha I)^{-1}\tilde{B} \nabla F(y^{(t)}) \leq  \alpha^{-2} \lVert \nabla F(y^{(t)}) \rVert^2.
	\end{align*}
	The second inequality is because $\tilde{B}^T(\tilde{B}\tilde{B}^T+\alpha I)^{-1}\tilde{B}$ is positive definite and its largest eigenvalue is smaller than $1$.
	Thus, we have
	\begin{align*}
	\left\lVert A^{-1}g - q \right\rVert \leq \left(\frac{1}{2}\right)^T \alpha^{-\frac{3}{2}}\left\lVert \nabla F(y^{(t)}) \right\rVert.
	\end{align*}
	
	Furthermore, we have 
	\begin{align*}
	\left\lVert H^{(t)}p^{(t)} - \nabla F(y^{(t)})\right\rVert  \leq& \left\lVert H^{(t)}\right\rVert \cdot \left\lVert p^{(t)} - [H^{(t)}]^{-1}\nabla F(y^{(t)})\right\rVert\\
	=&\left\lVert H^{(t)}\right\rVert \cdot \left\lVert\alpha^{-1}\nabla F(y^{(t)})- \tilde{B}^Tq - \left(\alpha^{-1}\nabla F(y^{(t)})- \tilde{B}^TA^{-1}g\right)\right\rVert\\
	\leq&\left(\frac{1}{2}\right)^T \alpha^{-\frac{3}{2}}\left\lVert H^{(t)}\right\rVert \cdot\left\lVert \tilde{B}\right\rVert\cdot \left\lVert \nabla F(y^{(t)}) \right\rVert\\
	\leq&\left(\frac{1}{2}\right)^T \alpha^{-\frac{3}{2}}\left\lVert H^{(t)}\right\rVert^{\frac{3}{2}} \cdot \left\lVert \nabla F(y^{(t)}) \right\rVert
	\end{align*}
	Since we set iteration number $T$ to be
	\[
	T = \log \left(\frac{\alpha^{-\frac{3}{2}}\left\lVert H^{(t)}\right\rVert^{\frac{3}{2}} c_1 \sqrt{\kappa}}{\epsilon_1}\right),
	\]
	we obtain the result.
	
	Because the main operation in preconditioned conjugate gradient is matrix-vector production, the total cost is $O(Tds + s^3)$.
\end{proof}

From Theorem~\ref{thm:sub_solver}, we can see that iteration number $T$ is only of order $O(\log \kappa)$. Therefore the time of computing a $p^{(t)}$ by Algorithm~\ref{alg:sub_solver} is cheap than by Eqn.~\eqref{eq:sw} when $O(\log\kappa) < s$. And this is common in practice.

\begin{algorithm}[tb]
	\caption{Fast sub-problem solver.}
	\label{alg:sub_solver}
	\begin{small}
		\begin{algorithmic}[1]
			\STATE {\bf Input:} Matrix $\tilde{B}$, regularizer $\alpha$, gradient $\nabla F(y^{(t)})$, and iteration number $T$;
			\STATE Compute $g = \tilde{B}\cdot\alpha^{-1}\nabla F(y^{(t)})$. Construct a $1/3$-subspace embedding matrix $G$ w.r.t $\tilde{B}$ and compute preconditioner matrix $P = \tilde{B}G G^T\tilde{B}$.
			\STATE Using $A = \tilde{B}\tilde{B}^T+\alpha I$, $T$, and $P$ as input of Algorithm~\ref{alg:pcg}, we get a approximate solution vector $q$ by Algorithm~\ref{alg:pcg}. 
			\STATE {\bf Output:} $p^{(t)} = \alpha^{-1}\nabla F(y^{(t)})- \tilde{B}^Tq$.
		\end{algorithmic}
	\end{small}
\end{algorithm}

\section{Experiments} \label{sec:implementation_experiment}
In Section~\ref{subsec:analysis}, we have shown that Nesterov's acceleration technique can improve the performance of approximate Newton method theoretically. In this section, we will validate our theory empirically. In particular, we first compare accelerated regularized sub-sampled Newton (Algorithm~\ref{alg:acc_reg_subsamp} refered as ARSSN) with regularized sub-sampled Newton (Algorithm~\ref{alg:reg_subsamp} refered as RSSN) on the ridge regression whose objective function is a quadratic function to validate the theoretical analysis in Section~\ref{subsec:analysis}. Then we conduct more experiments on a popular machine learning problem called Ridge Logistic Regression, and compare accelerated regularized sub-sampled Newton with other classical algorithms.   
\begin{figure}[]
	\subfigtopskip = 0pt
	\begin{center}
		\centering
		\subfigure[$|\SM|=1\%$]{\includegraphics[height=37mm]{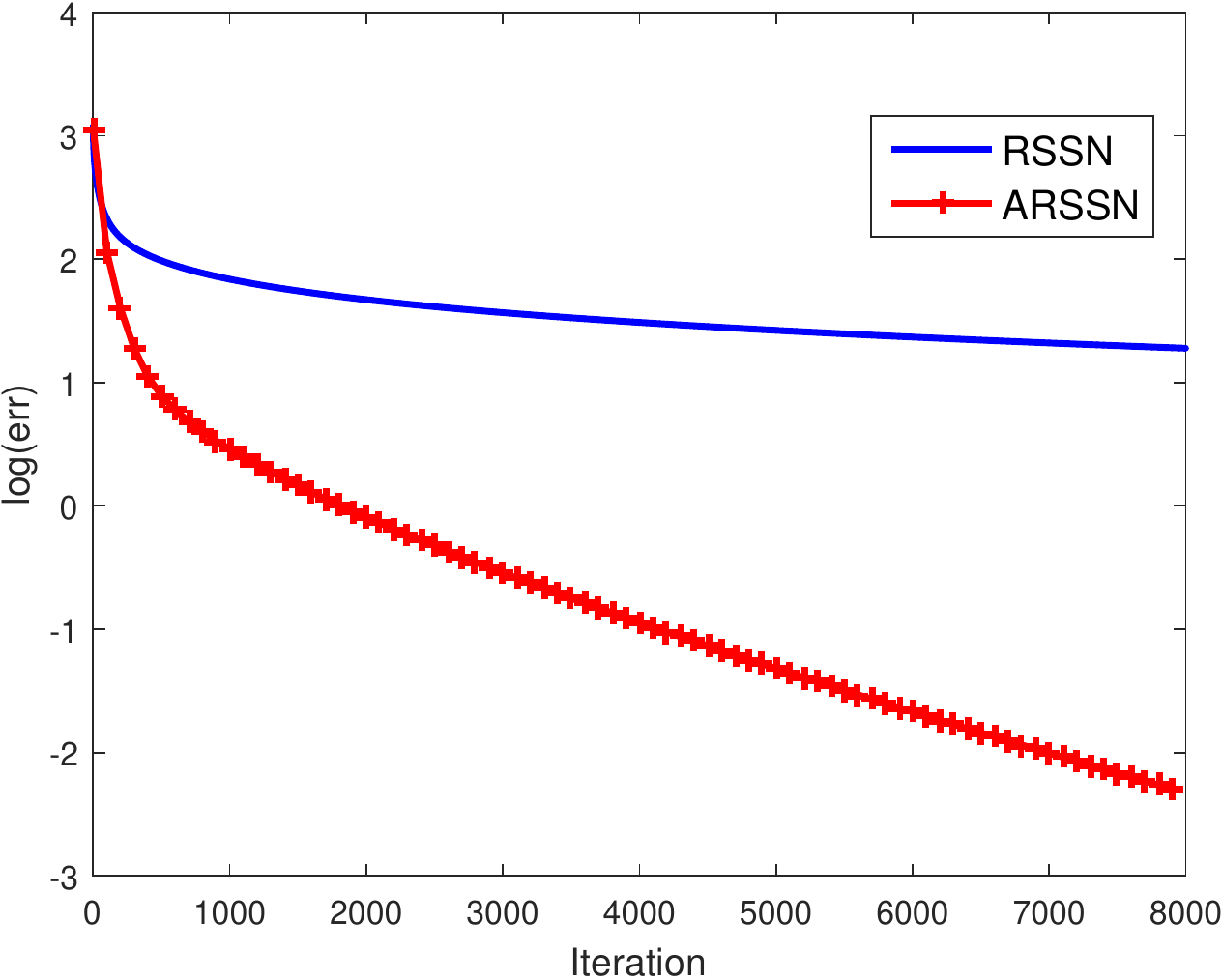}}~
		\subfigure[$|\SM| = 5\%$]{\includegraphics[height=37mm]{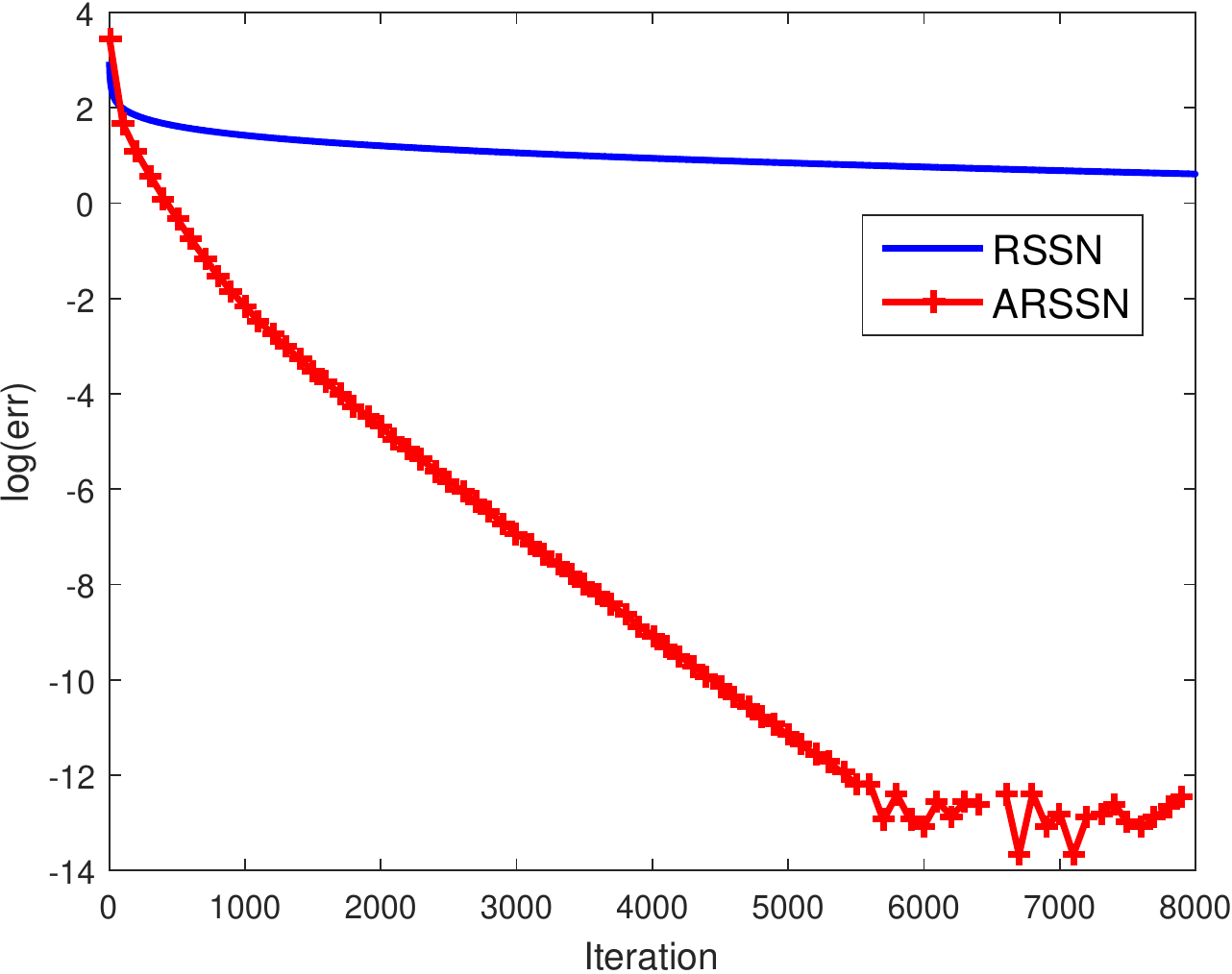}}~
		\subfigure[$|\SM| = 10\%$]{\includegraphics[height=37mm]{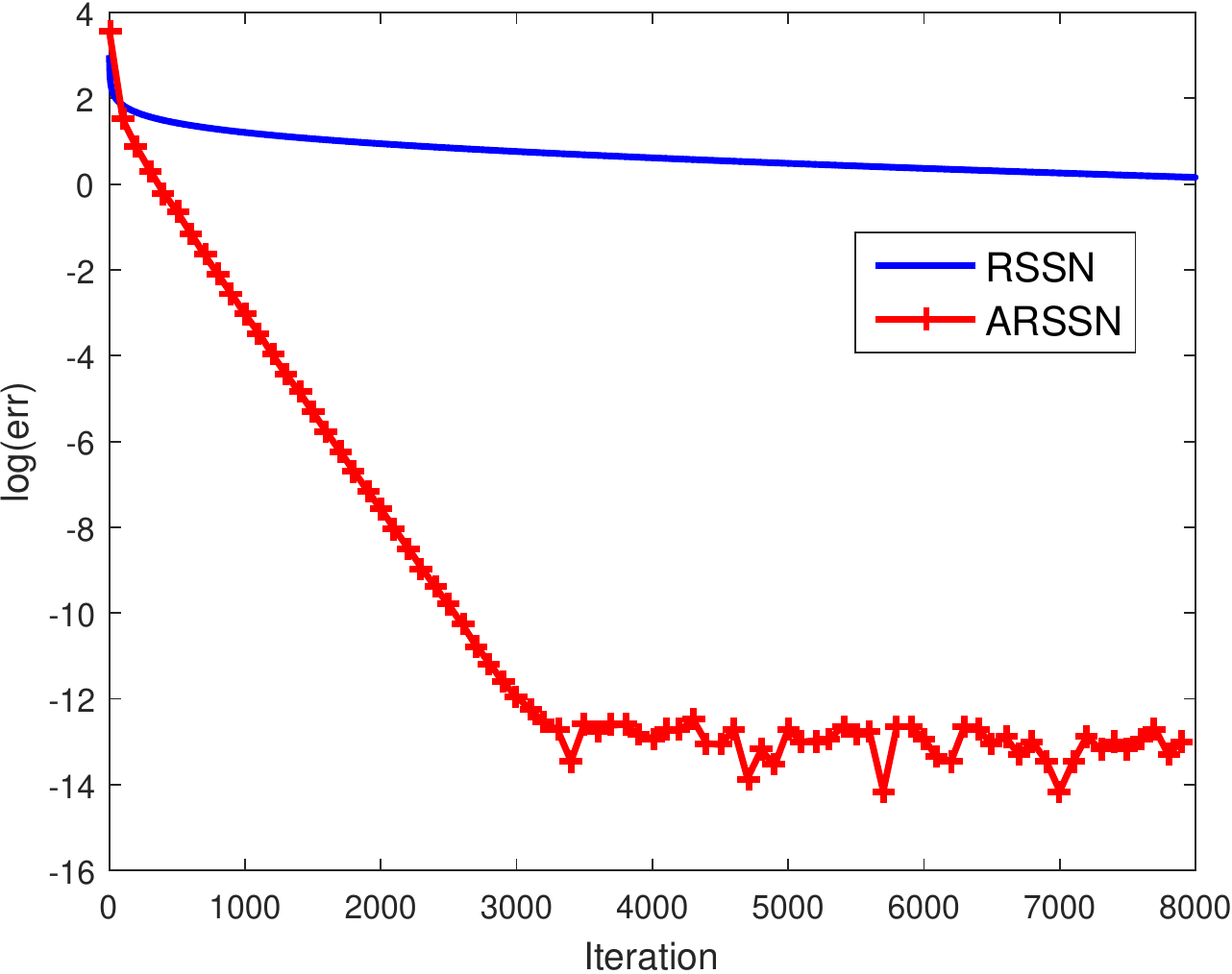}}
	\end{center}
	\vskip -0.15in
	\caption{Experiment on the least square regression with $|\SM|$ being different sample size }
	\vskip -0.25in
	\label{fig:lsr}
\end{figure} 
\subsection{Experiments on the Ridge Regression}

The ridge regression is defined as follows
\begin{align}
F(x) = \|Ax-b\|^2 + \lambda\|x\|^2, \label{eq:lsr}
\end{align}
where $\lambda$ is a regularizer controls the condition number of the Hessian.  In our experiment
In our experiments, we choose dataset `gisette' just as depicted in Table~\ref{tb:data}. And we set the  regularizer  $\lambda=1$.

In experiments, we set the sample size $|\SM|$ to be $1\%n$, $5\%n$ and $10\%n$. The regularizer $\alpha$ of Algorithm~\ref{alg:acc_reg_subsamp} is properly chosen according to $|\SM|$. ARSSN and RSSN share the same $|\SM|$. And the acceleration parameter $\theta$ of  is fixed and appropriately selected. We report the experiments result in Figure~\ref{fig:lsr}.

From  Figure~\ref{fig:lsr}, we can see that ARSSN and RSSN have significant difference in convergence rate and ARSSN is much faster. This validates the analysis in Section~\ref{subsec:analysis}. Besides, we can also observe that ARSSN runs faster as sample size $|\SM|$ increases. When $|\SM| = 10\%n$, ARSSN takes only about $3000$ iterations to achieve an $10^{-14}$ error while it needs about $6000$ iterations to achieve the same precision when $|\SM| = 5\%n$.

\begin{table}[]
	\centering
	\caption{Datasets summary(sparsity$=\frac{\#\text{Non-Zero Entries}}{n\times d}$)}
	\label{tb:data}
	\begin{tabular}{ccccc}
		\hline
		Dataset~~~~ &~~~~ $n$~~~~&~~~~$d$~~~~&~~~~sparsity~~~~&~~~~source \\ \hline
		gisette~~~~ &~~~~ $5,000$~~~~&~~~~$6,000$~~~~&~~~~dense~~~~&~~~~libsvm dataset   \\
		sido0~~~~&~~~~$12,678$~~~~&~~~~$4,932$~~~~&~~~~dense~~~~&~~~~\cite{Guyon}    \\
		svhn~~~~    &~~~~  $19,082$~~~~&~~~~$3,072$~~~~&~~~~dense~~~~&~~~~libsvm dataset     \\ 
		rcv1~~~~&~~~~$20,242$~~~~ &~~~~ $47,236$~~~~ &~~~~$0.16\%$~~~~ &~~~~libsvm dataset     \\ 
		real-sim ~~~~&~~~~$72,309$~~~~&~~~~$20,958$~~~~&~~~~$0.24\%$~~~~&~~~~libsvm dataset      \\ 
		avazu~~~~&~~~~$2,085,163$~~~~&~~~~$999,975$~~~~&~~~~$0.0015\%$~~~~&~~~~libsvm dataset     \\
		\hline
	\end{tabular}
\end{table}
\subsection{Experiments on the Ridge Logistic Regression}
We conduct experiments on the Ridge Logistic Regression problem whose objective is
\begin{equation}
F(x) = \frac{1}{n}\sum_{i=1}^{n} \log [1+\exp(-b_i\langle a_i, x\rangle)] + \frac{\lambda}{2}\|x\|^2, \label{eq:rlg}
\end{equation}
where $a_i \in \RB^{d}$ is the $i$-th input vector, and $b_i\in\{-1,1\}$ is the corresponding label.

We conduct our experiments on six datasets: `gisette', `protein', `svhn', `rcv1', `sido0', and `real-sim'. The first three datasets are  dense and the last three ones are sparse. We give the detailed description of the datasets in Table~\ref{tb:data}. Notice that the size and dimension of dataset are close to each other, so the sketch Newton method \citep{pilanci2015newton,xu2016sub} can not be used. In our experiments, we try different settings of the regularizer $\lambda$, as $1/n$, $10^{-1}/n$, and $10^{-2}/n$ to represent different levels of regularization. 

We compare  Algorithm~\ref{alg:acc_reg_subsamp} (ARSSN) with RSSN (Algorithm~\ref{alg:reg_subsamp}), AGD (\cite{nesterov1983method}) and SVRG (\cite{johnson2013accelerating}) which are classical and popular optimization methods in machine learning. 

In our experiments, the sample size $|\SM|$ and regularizer $\alpha$ of ARSSN are chosen according to Theorem~\ref{thm:main} while those of RSSN are chosen by Theorem~\ref{thm:Reg_subnewton}. For a fixed $|\SM|$, a proper $\alpha$ can be found after several tries. 

In our experiment, the sub-sampled Hessian $H^{(t)}$ constructed in Algorithm~\ref{alg:acc_reg_subsamp} can be written as
\[
H^{(t)} = \tilde{A}^{T}\tilde{A} + (\alpha+\lambda) I,
\]
where $\tilde{A}\in\RB^{\ell\times d}$, where $\ell<n$. We can resort to Woodbury' identity to compute the inverse of $H^{(t)}$. If $\tilde{A}$ is sparse, we use conjugate gradient (Algorithm~\ref{alg:cg} in Appendix) to obtain an approximation of $[H^{(t)}]^{-1}\nabla F(x^{(t)})$ which exploits the sparsity of $\tilde{A}$. In our experiments on sparse datasets, we set $tol = 0.1\|\nabla F(x^{(t)})\|$ for conjugate gradient (Algorithm~\ref{alg:cg} ).

For the acceleration parameter $\theta$, it is hard to get the best value for ARSSN just like AGD. However, our theoretical analysis implies that for large sample size $|\SM|$, a small $\theta$ should chosen. In our experiments, we set  $\theta^{(t)} = \frac{t}{t+16}$ in Algorithm~\ref{alg:acc_reg_subsamp} for the dense datasets and $\theta^{(t)} = \frac{t}{t+30}$ for the sparse datasets. We set $x^{(0)} = 0$ for all the datasets and all the algorithms.

We report our result in Figure~\ref{fig:gis},~\ref{fig:sido0},~\ref{fig:svhn},~\ref{fig:rcv1},~\ref{fig:real-sim} and~\ref{fig:avazu}. We can see that ARSSN converges much faster than RSSN when these two algorithms have the same sample size. This shows Nesterov's acceleration technique can promote the performance of regularized sub-sampled Newton effectively.  We can also observe that ARSSN outperforms AGD significantly even when the sample size $\SM$ is $1\%n$ or even less. This validates the fact that adding  curvature information is an effective way to improve the ability of accelerated gradient descent.

Compared with SVRG, we can see that  ARSSN also has better performance. Specifically, ARSSN performs much better than SVRG when $\lambda$ is small like $\lambda = 10^{-2}/n$. When $\lambda = 10^{-1}$, ARSSN has comparable performance with SVRG on `sido0', `rcv1', `real-sim', and `avazu' while  ARSSN performs much better on `gisette' and `svhn'. This means that ARSSN is an efficient algorithm compared with classical stochastic first order method.

The experiments also reveal the fact that ARSSN has great advantages over other algorithms when the problem is ill-conditioned. On `gisette', `sido0', `svhn', other algorithms have very poor performance on the case that $\lambda = 10^{-2}/n$. But ARSSN shows good convergence property.

\subsection{Conclusion of Empirical Study}

The above experiments show that Nesterov's acceleration is an effective way to promote the convergence rate of approximate Newton methods. The experiments also show that adding some curvature information always help AGD to obtain a faster convergence rate.

\begin{figure}[]
	\subfigtopskip = 0pt
	\begin{center}
		\centering
		\subfigure[$\lambda = 1/n$]{\includegraphics[width=45mm]{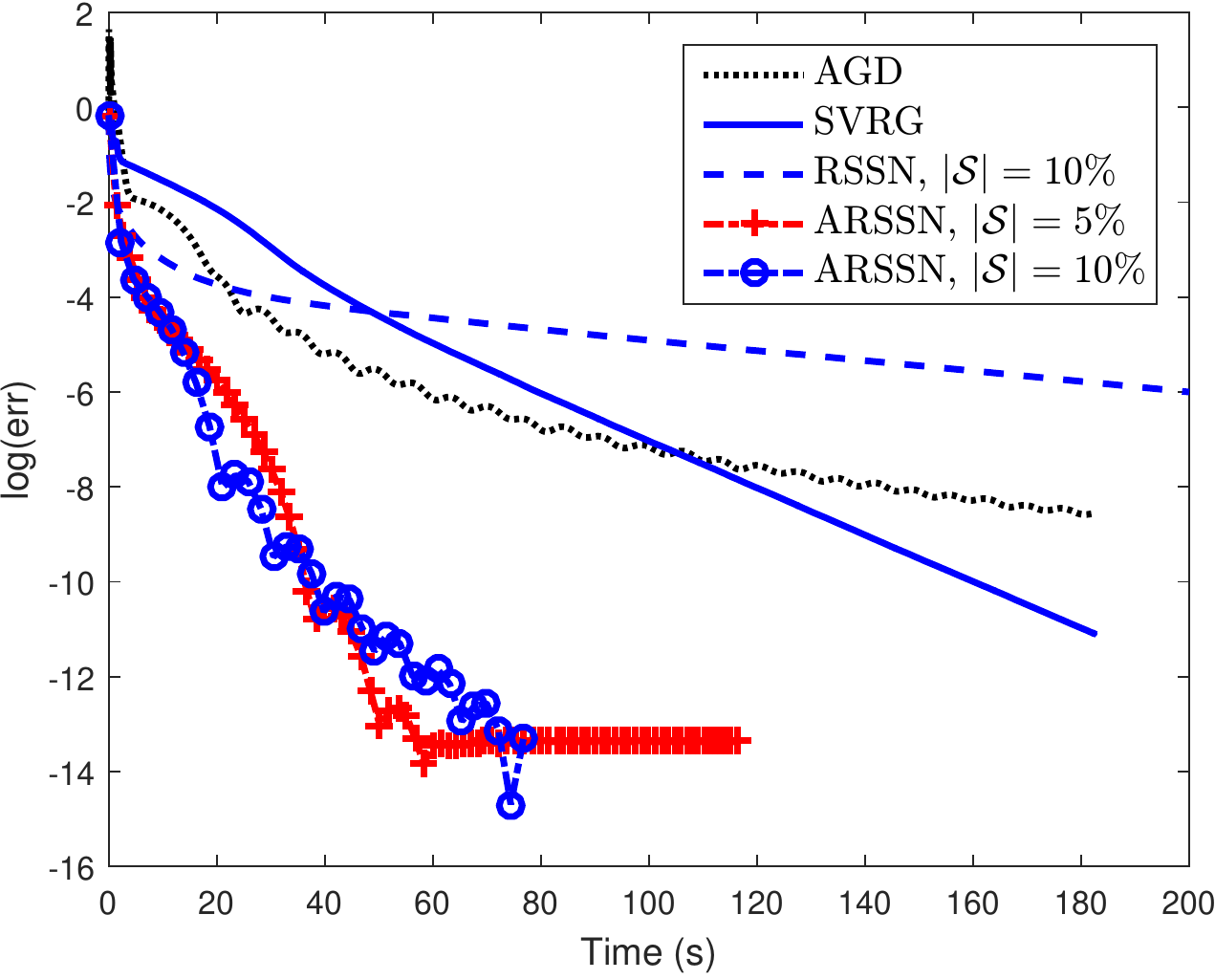}}~
		\subfigure[$\lambda = 10^{-1}/n$]{\includegraphics[width=45mm]{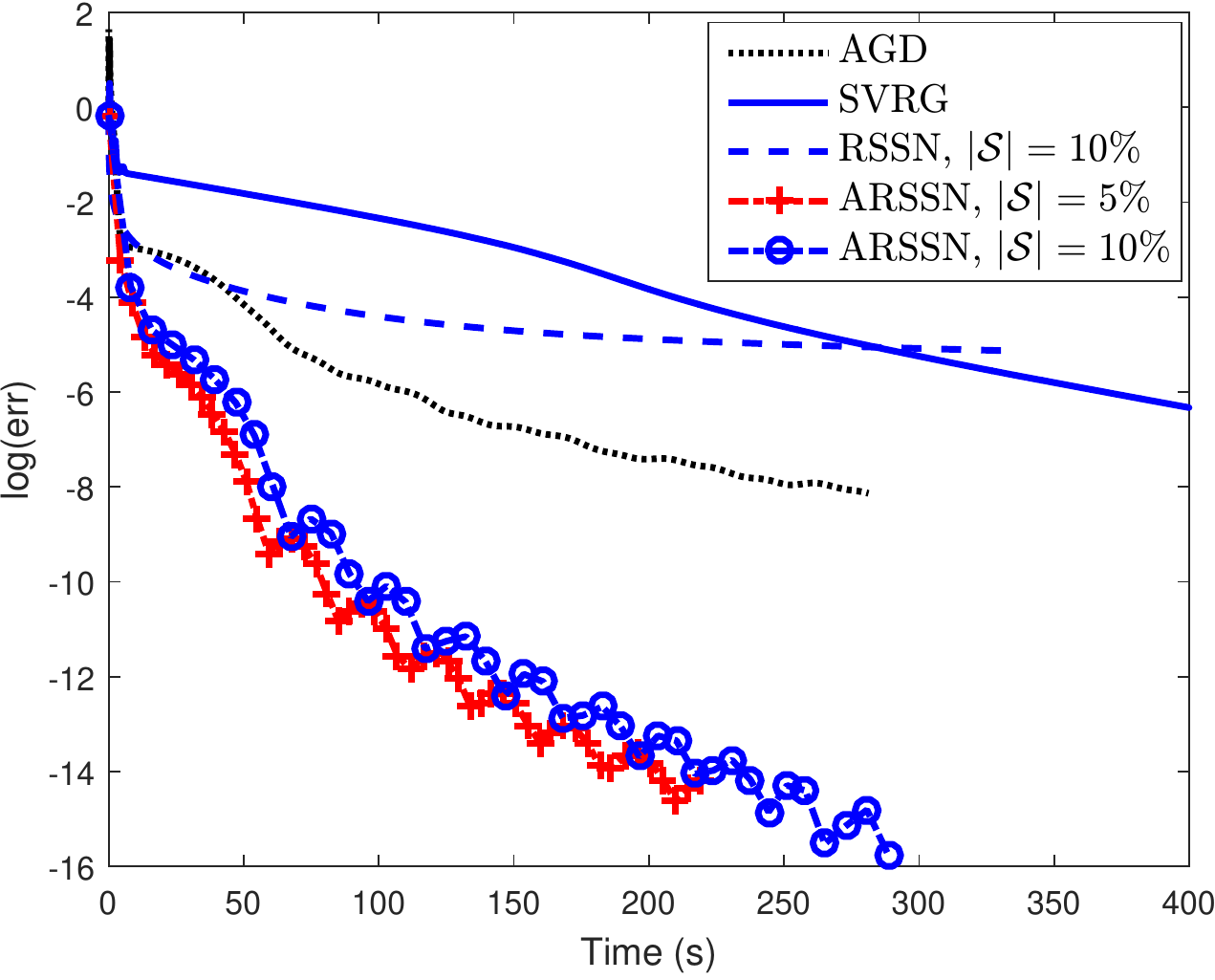}}~
		\subfigure[$\lambda = 10^{-2}/n$]{\includegraphics[width=45mm]{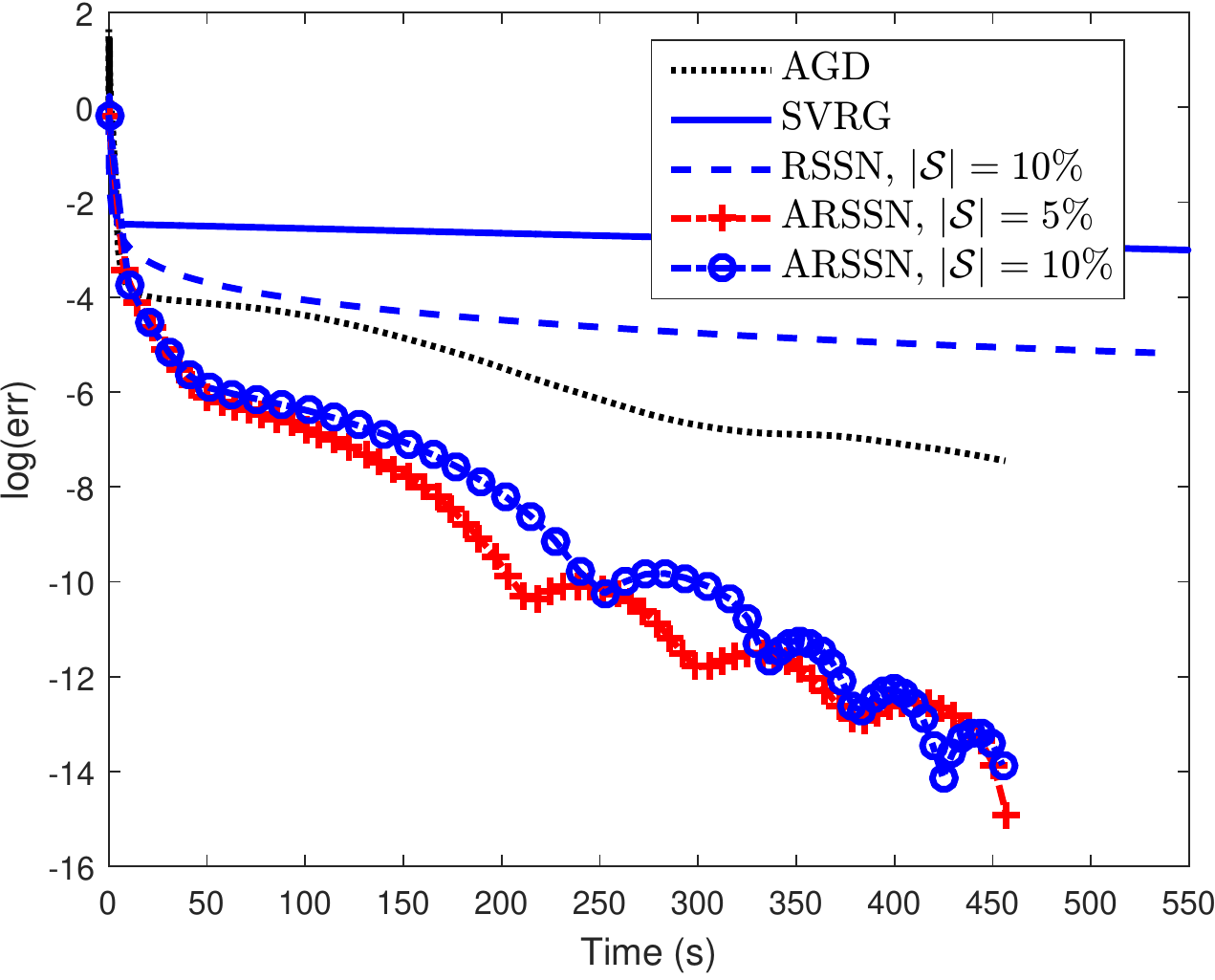}}
	\end{center}
	\vskip -0.2in
	\caption{Experiment on `gisette'}
	\vskip -0.2in
	\label{fig:gis}
\end{figure}
\begin{figure}[]
	\subfigtopskip = 0pt
	\begin{center}
		\centering
		\subfigure[$\lambda = 1/n$]{\includegraphics[width=45mm]{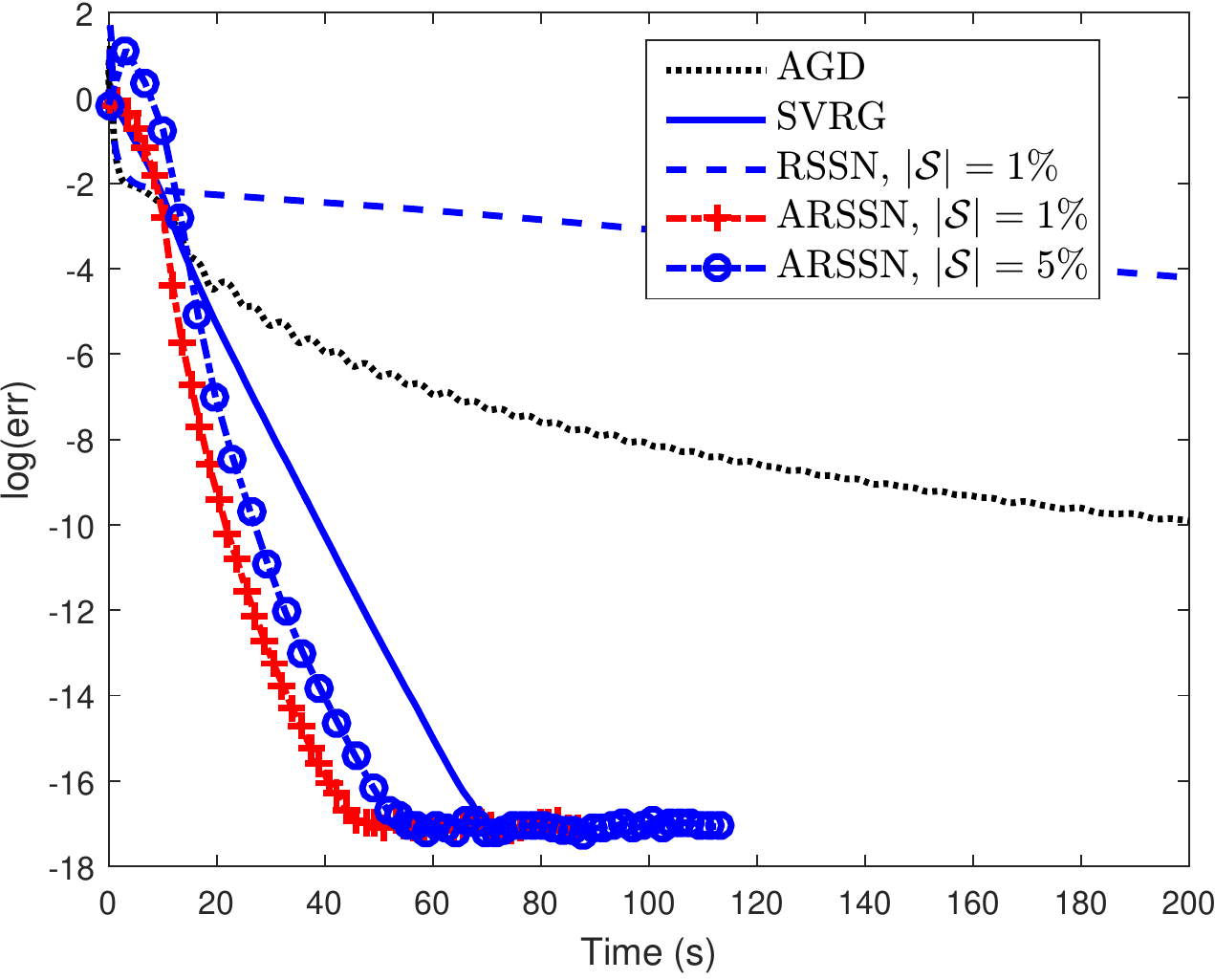}}~
		\subfigure[$\lambda = 10^{-1}/n$]{\includegraphics[width=45mm]{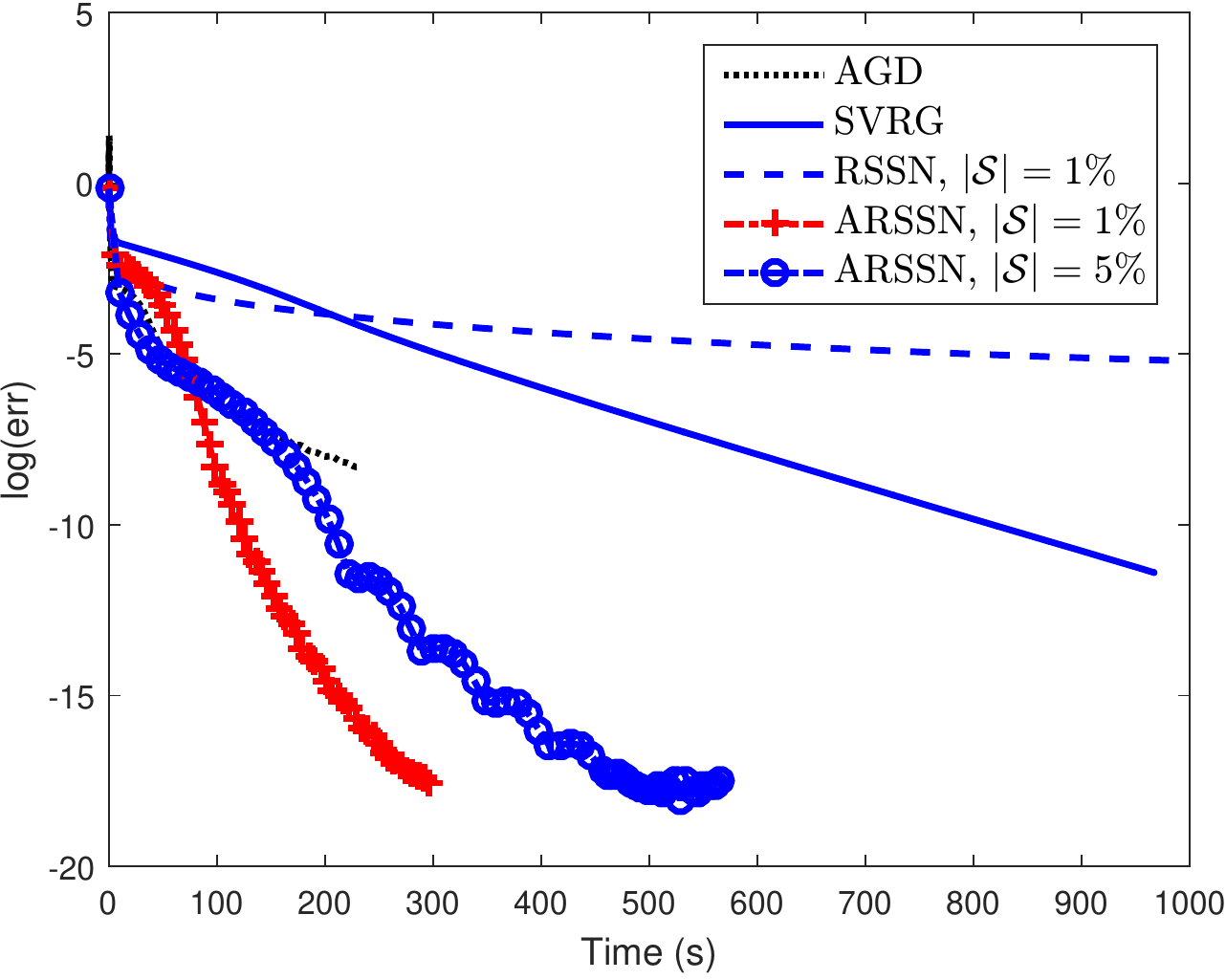}}~
		\subfigure[$\lambda = 10^{-2}/n$]{\includegraphics[width=45mm]{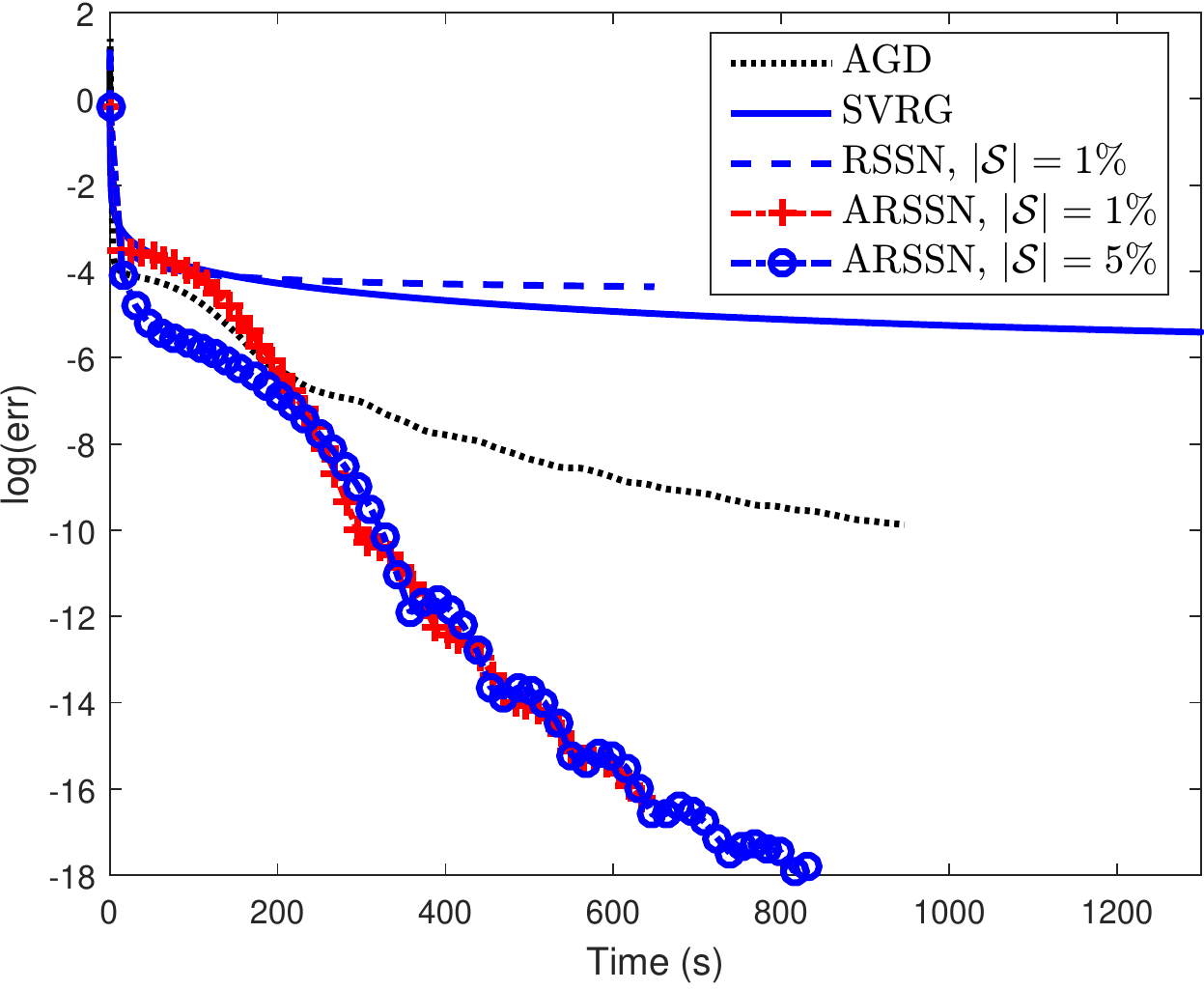}}
	\end{center}
	\vskip -0.2in
	\caption{Experiment on `sido0'}
	\vskip -0.2in
	\label{fig:sido0}
\end{figure}
\begin{figure}[]
	\subfigtopskip = 0pt
	\begin{center}
		\centering
		\subfigure[$\lambda = 1/n$]{\includegraphics[width=45mm]{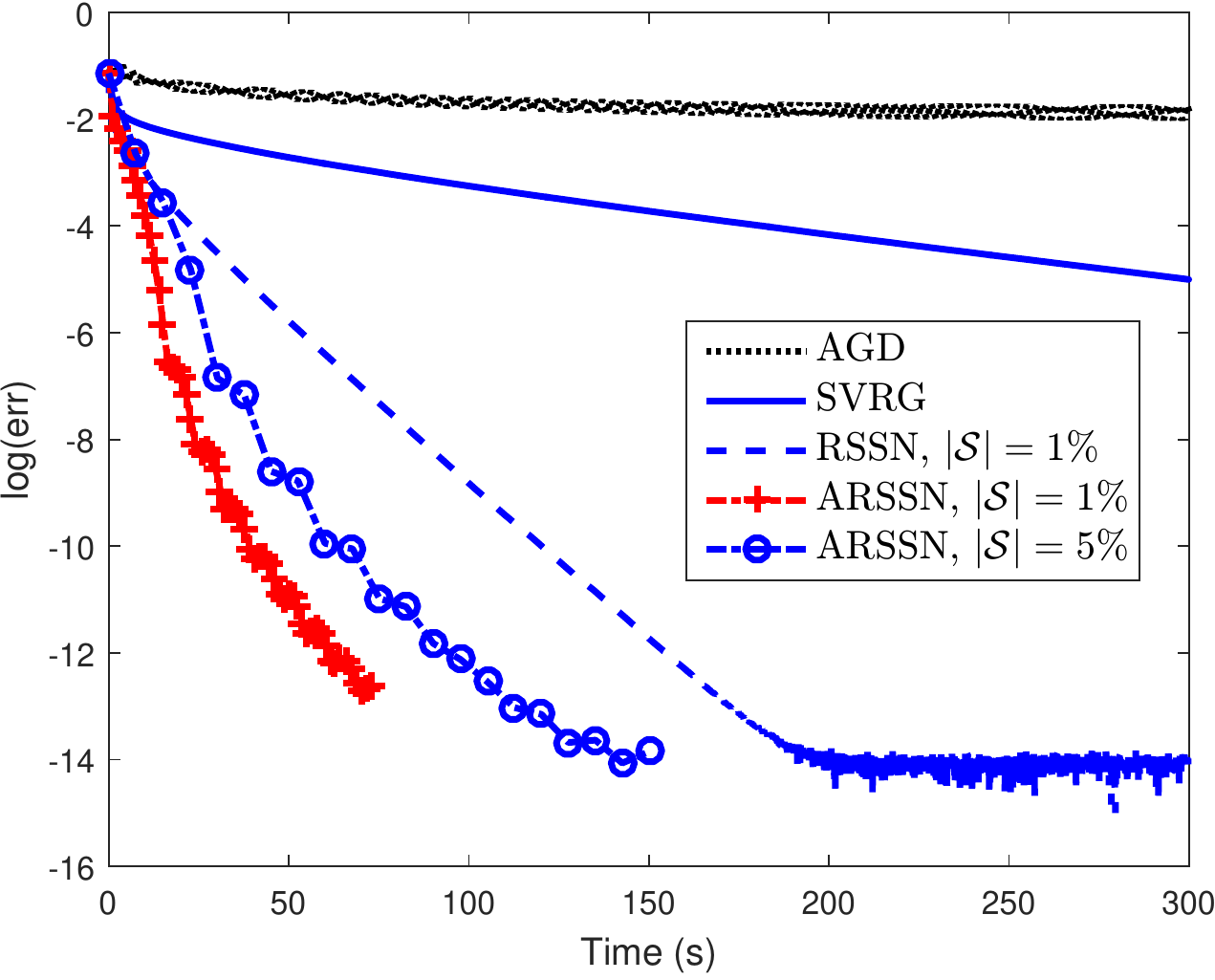}}~
		\subfigure[$\lambda = 10^{-1}/n$]{\includegraphics[width=45mm]{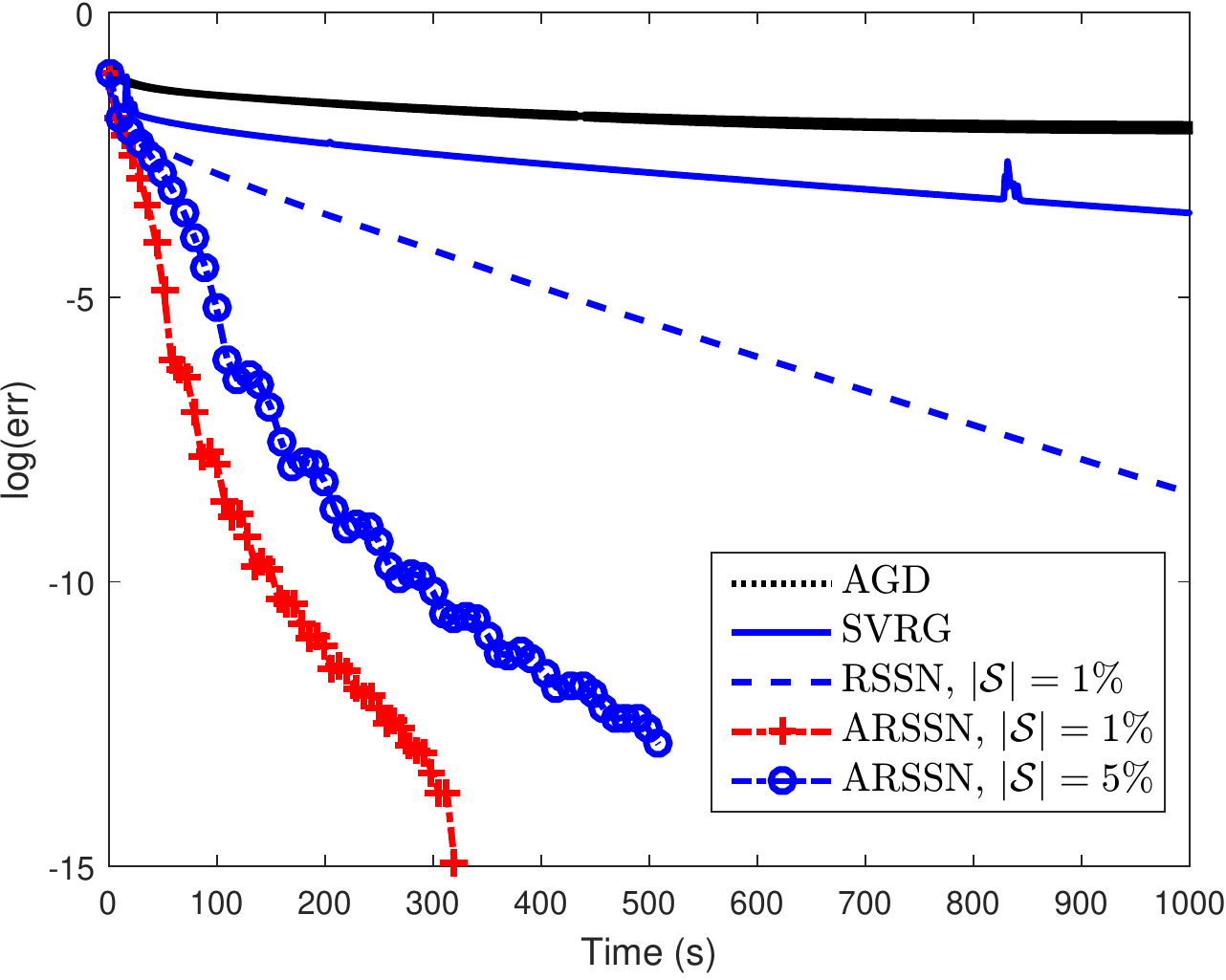}}~
		\subfigure[$\lambda = 10^{-2}/n$]{\includegraphics[width=45mm]{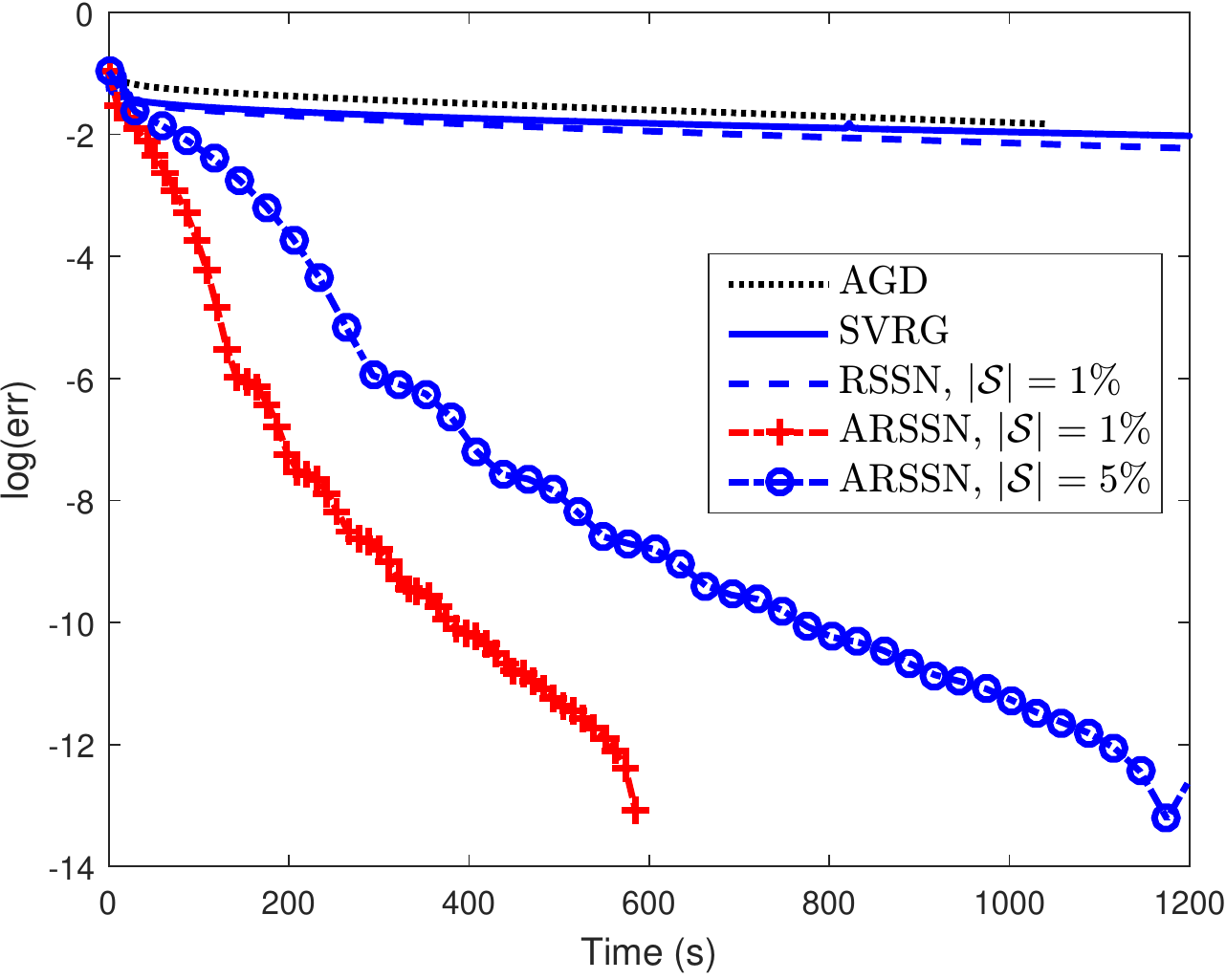}}
	\end{center}
	\vskip -0.2in
	\caption{Experiment on `svhn'}
	\vskip -0.2in
	\label{fig:svhn}
\end{figure}

\begin{figure}[]
	\subfigtopskip = 0pt
	\begin{center}
		\centering
		\subfigure[$\lambda = 1/n$]{\includegraphics[width=45mm]{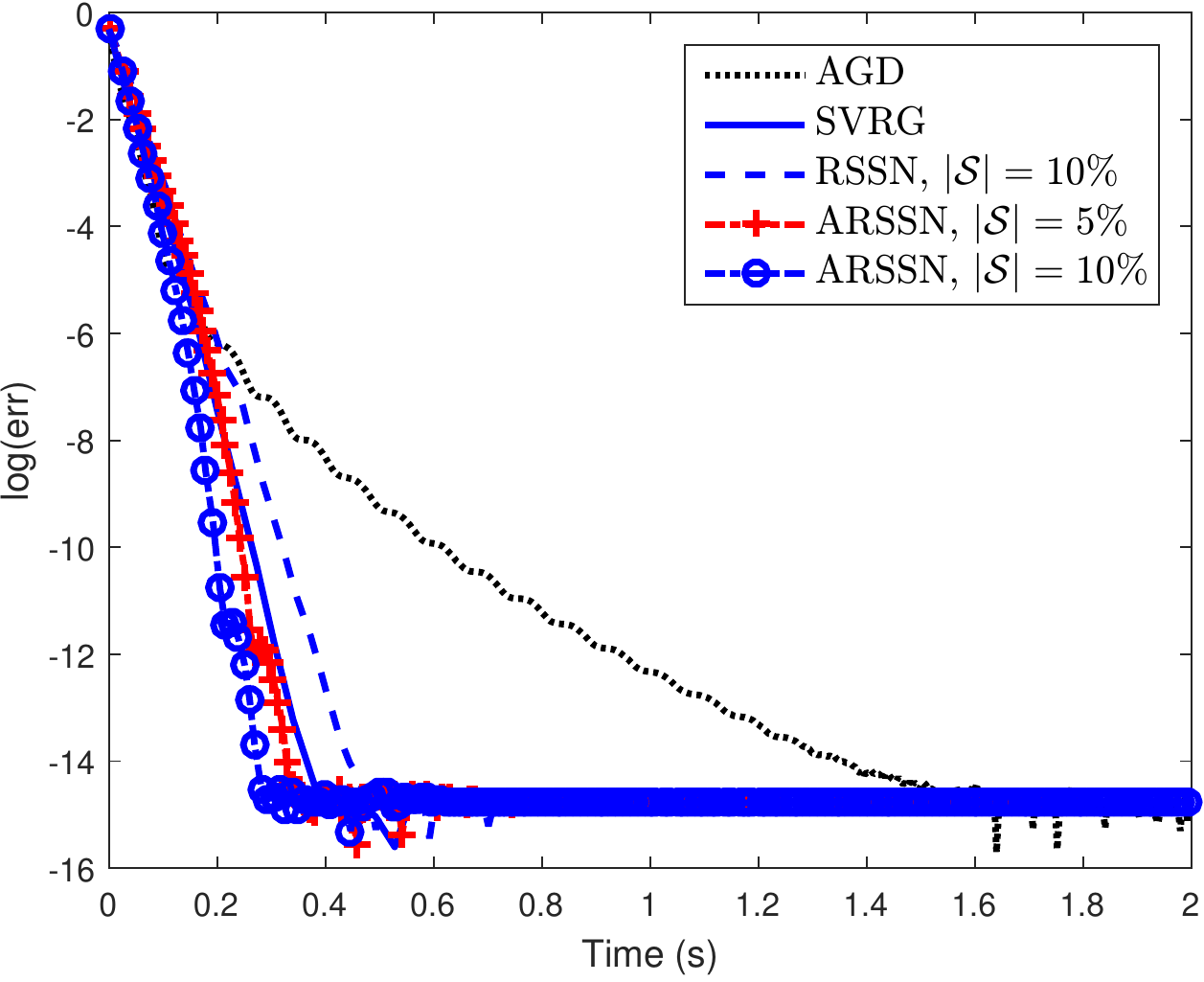}}~
		\subfigure[$\lambda = 10^{-1}/n$]{\includegraphics[width=45mm]{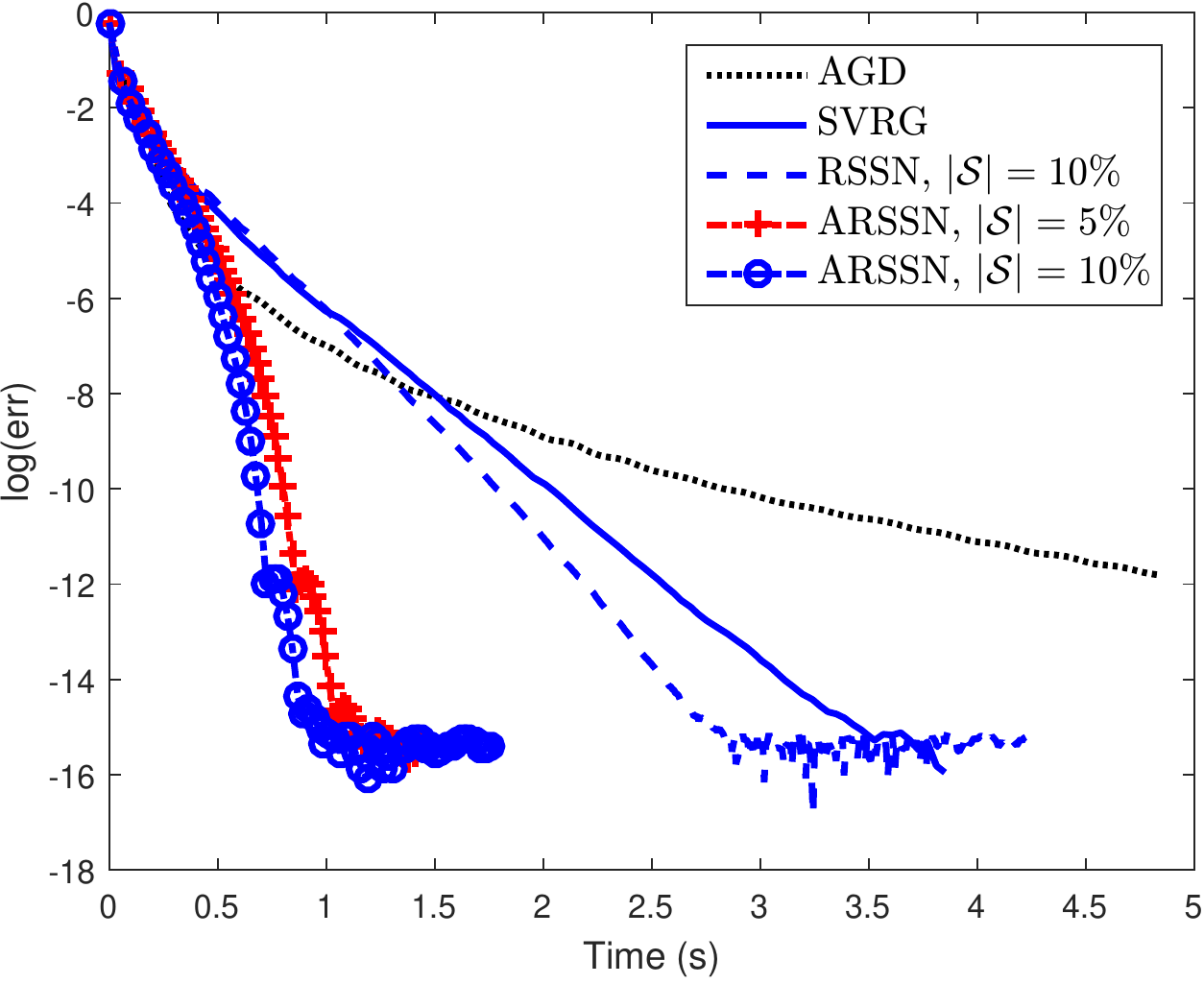}}~
		\subfigure[$\lambda = 10^{-2}/n$]{\includegraphics[width=45mm]{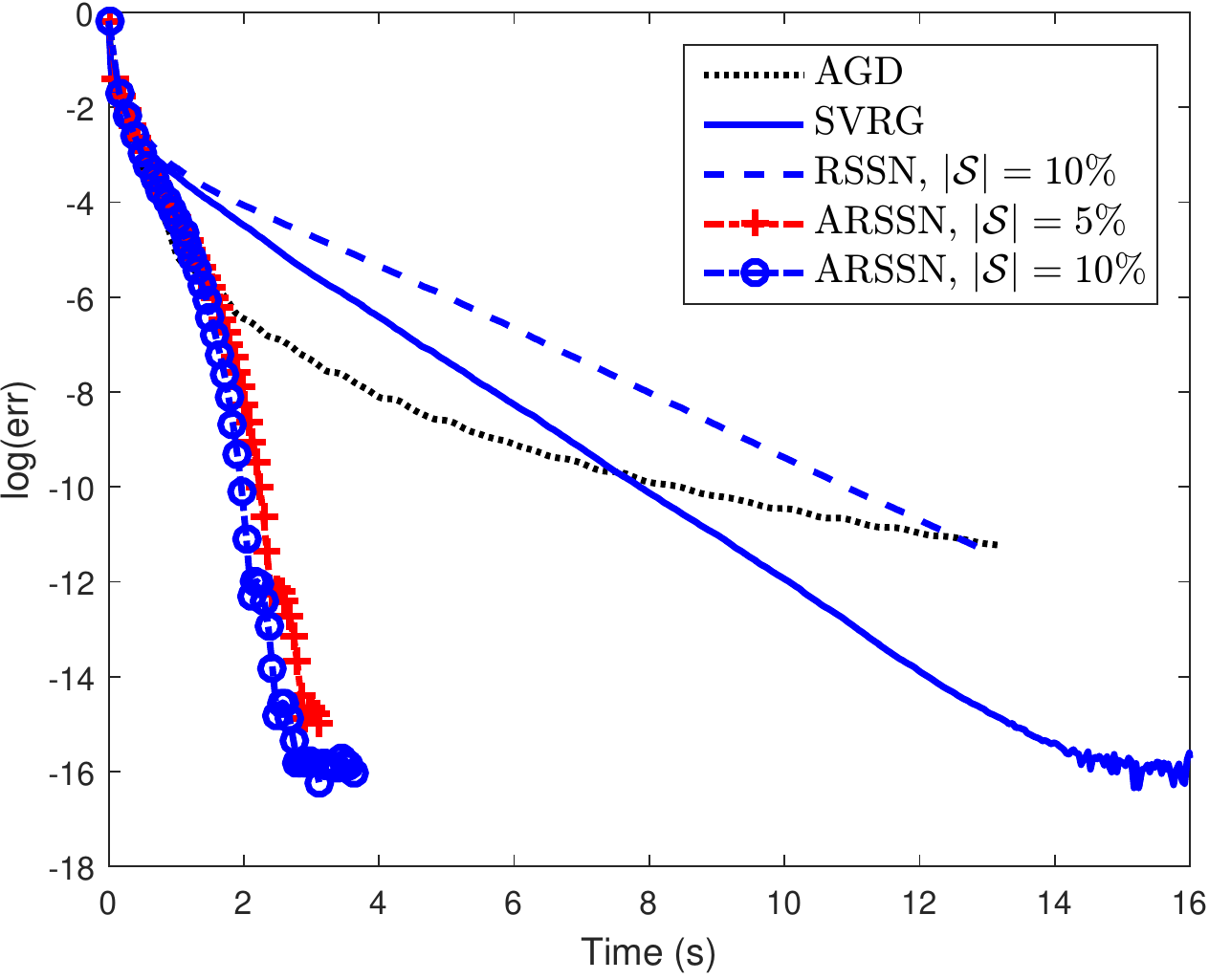}}
	\end{center}
	\vskip -0.2in
	\caption{Experiment on `rcv1'}
	\vskip -0.2in
	\label{fig:rcv1}
\end{figure}
\begin{figure}[]
	\subfigtopskip = 0pt
	\begin{center}
		\centering
		\subfigure[$\lambda = 1/n$]{\includegraphics[width=45mm]{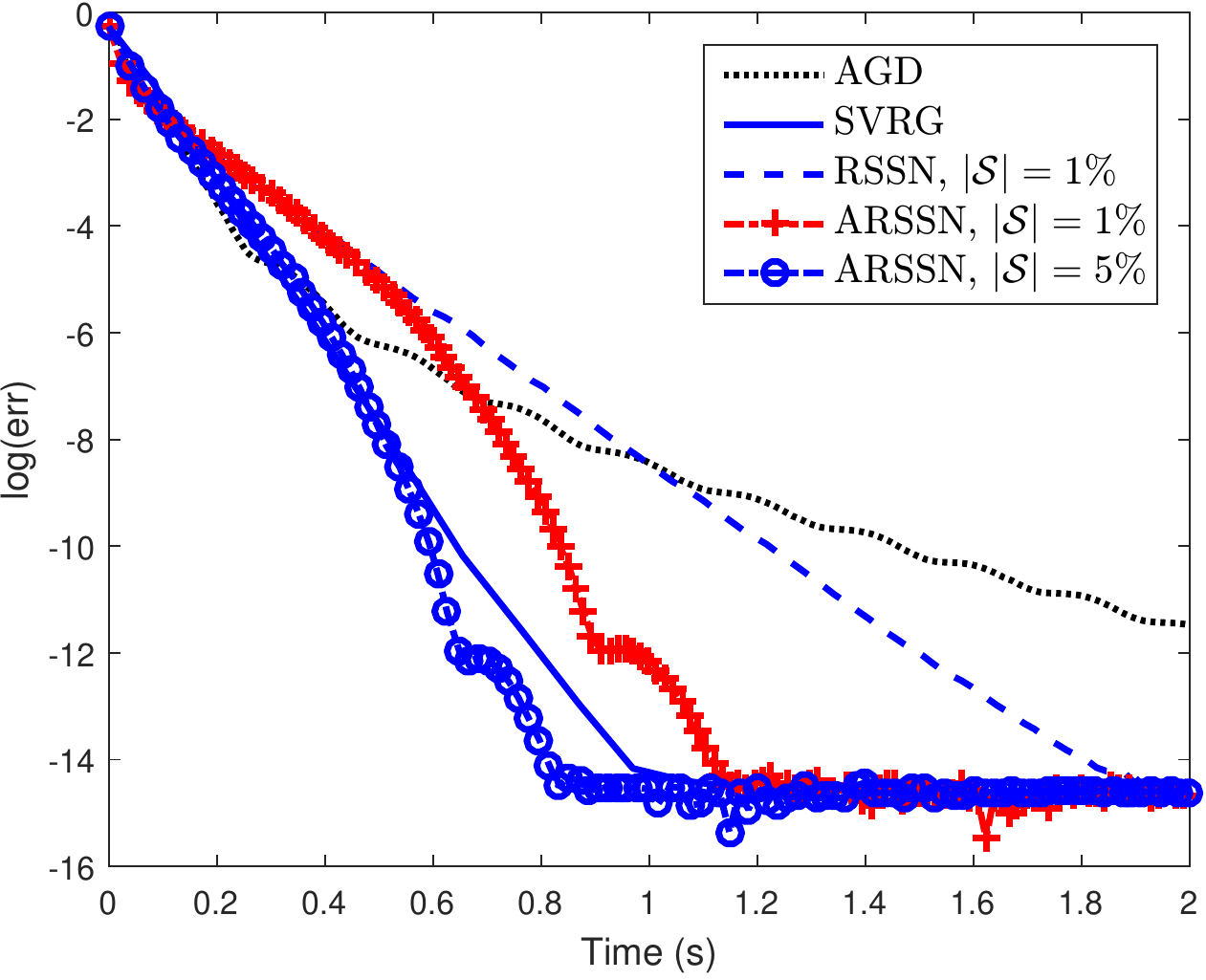}}~
		\subfigure[$\lambda = 10^{-1}/n$]{\includegraphics[width=45mm]{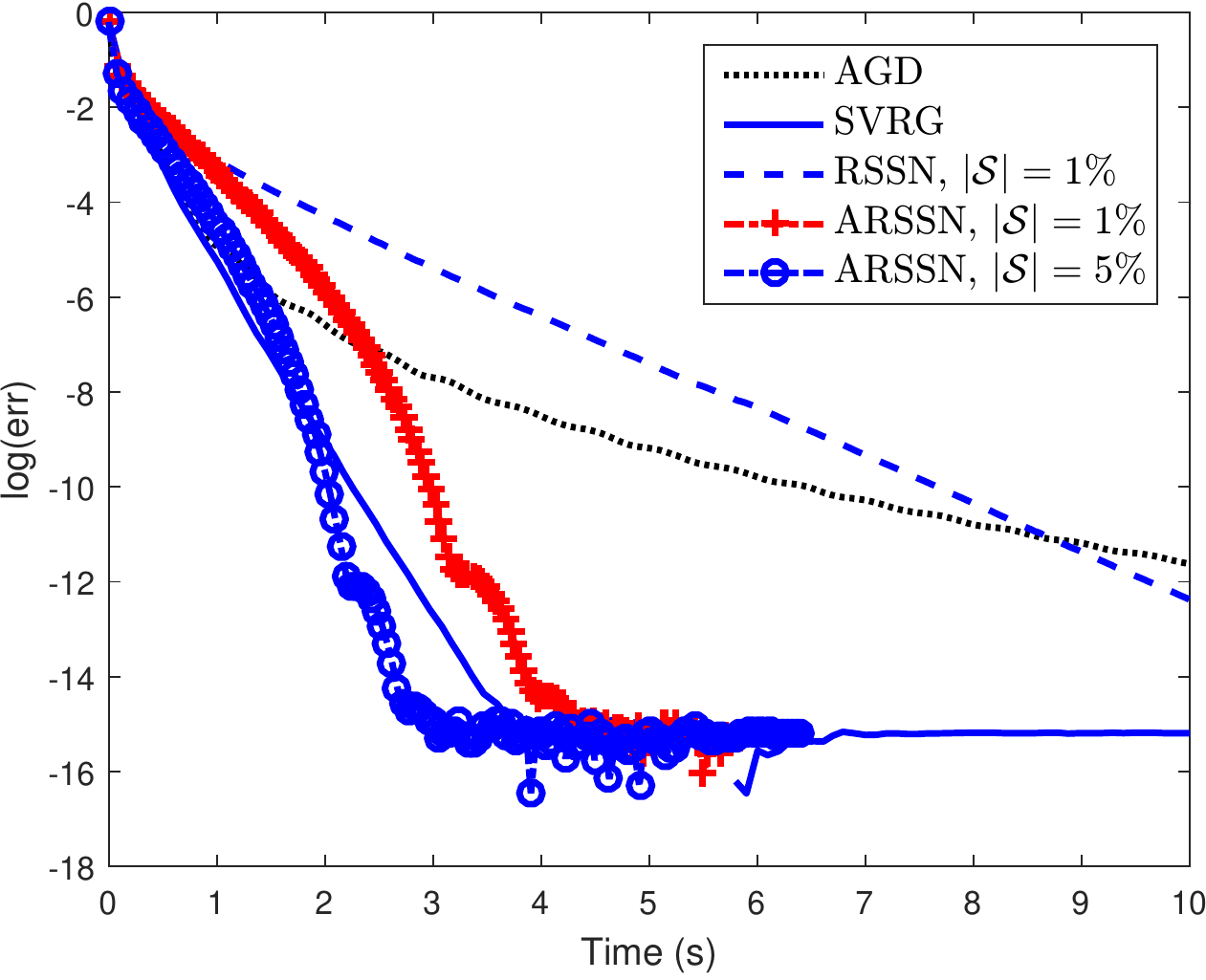}}~
		\subfigure[$\lambda = 10^{-2}/n$]{\includegraphics[width=45mm]{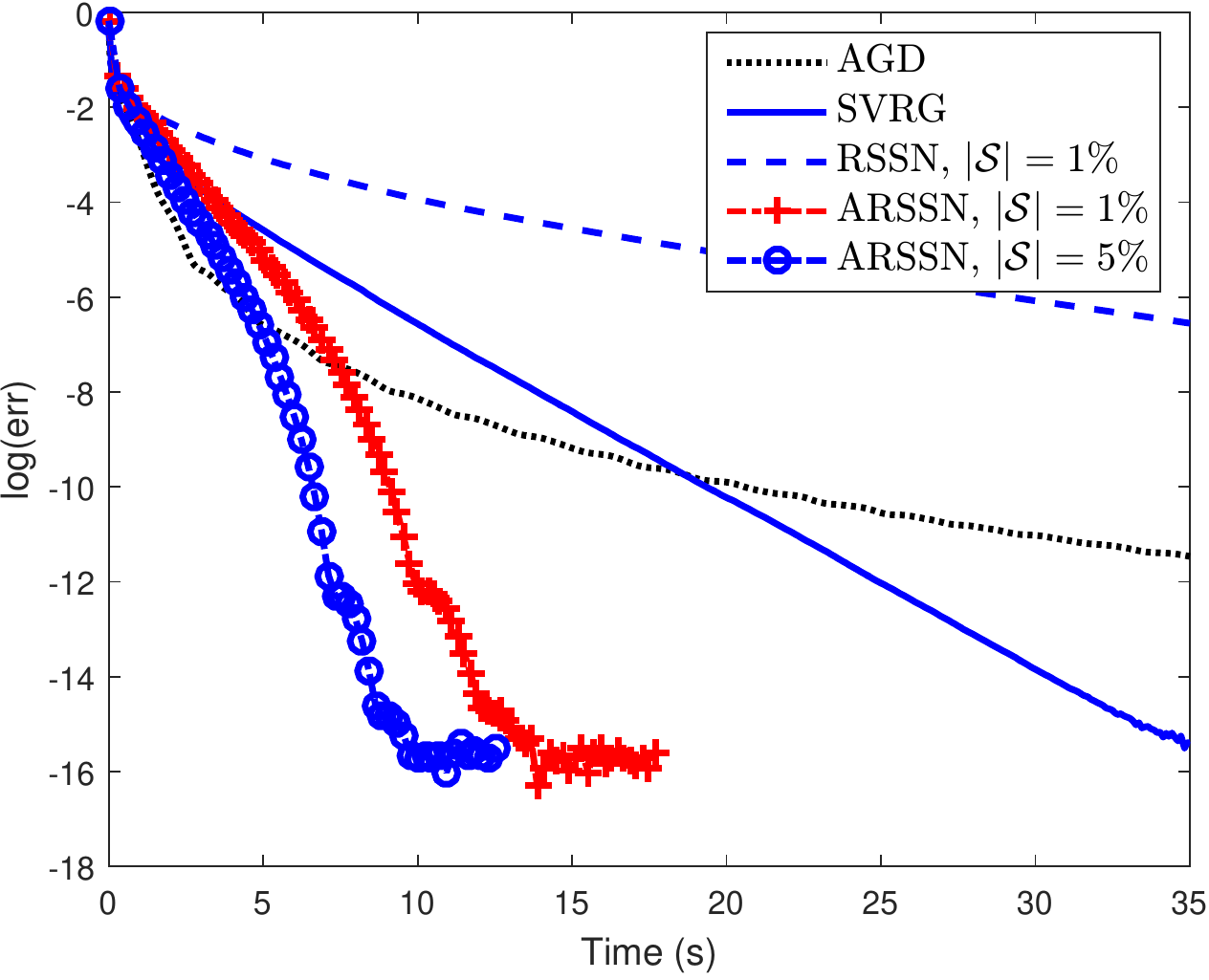}}
	\end{center}
	\vskip -0.2in
	\caption{Experiment on `real-sim'}
	\vskip -0.2in
	\label{fig:real-sim}
\end{figure}
\begin{figure}[]
	\subfigtopskip = 0pt
	\begin{center}
		\centering
		\subfigure[$\lambda = 1/n$]{\includegraphics[width=45mm]{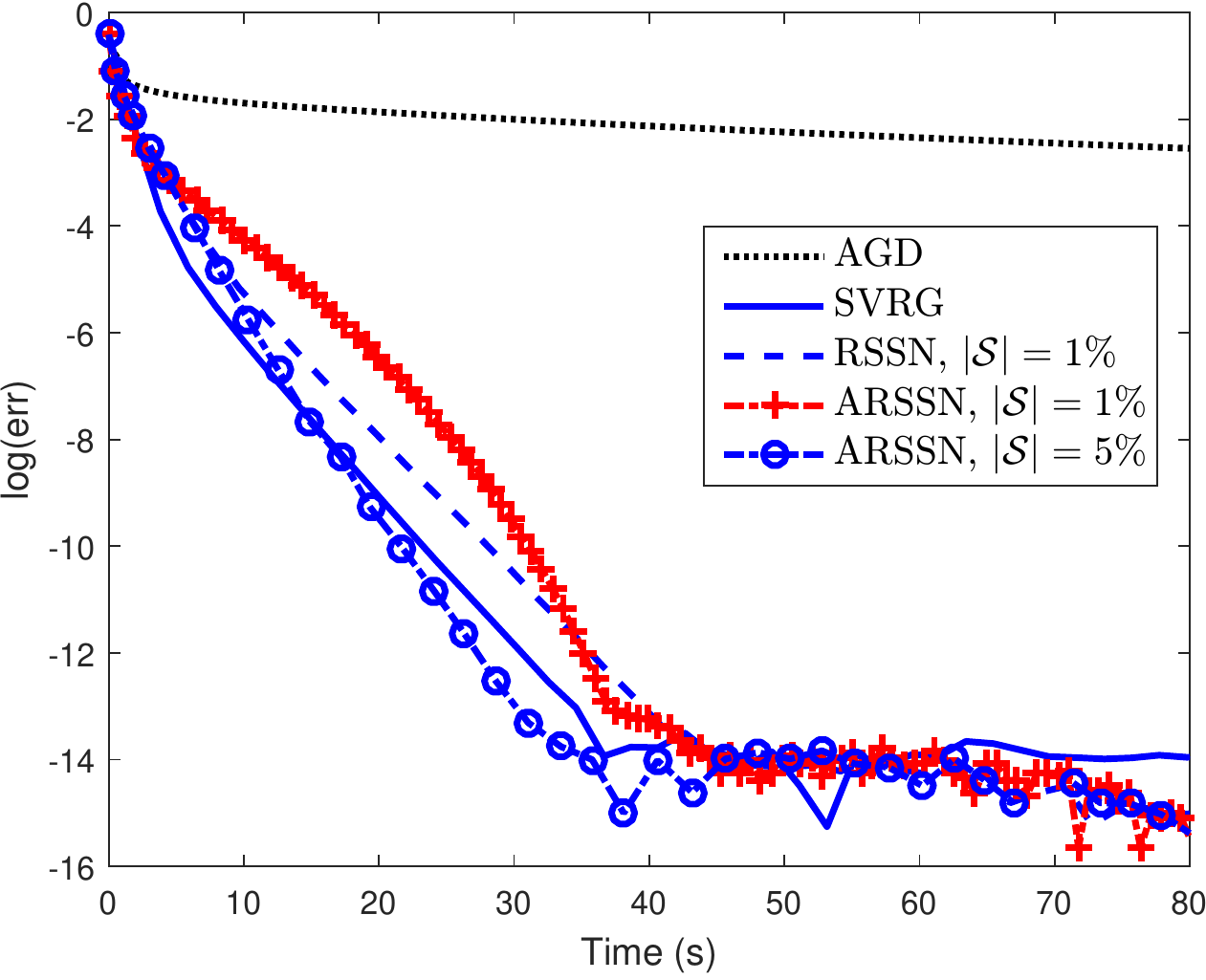}}~
		\subfigure[$\lambda = 10^{-1}/n$]{\includegraphics[width=45mm]{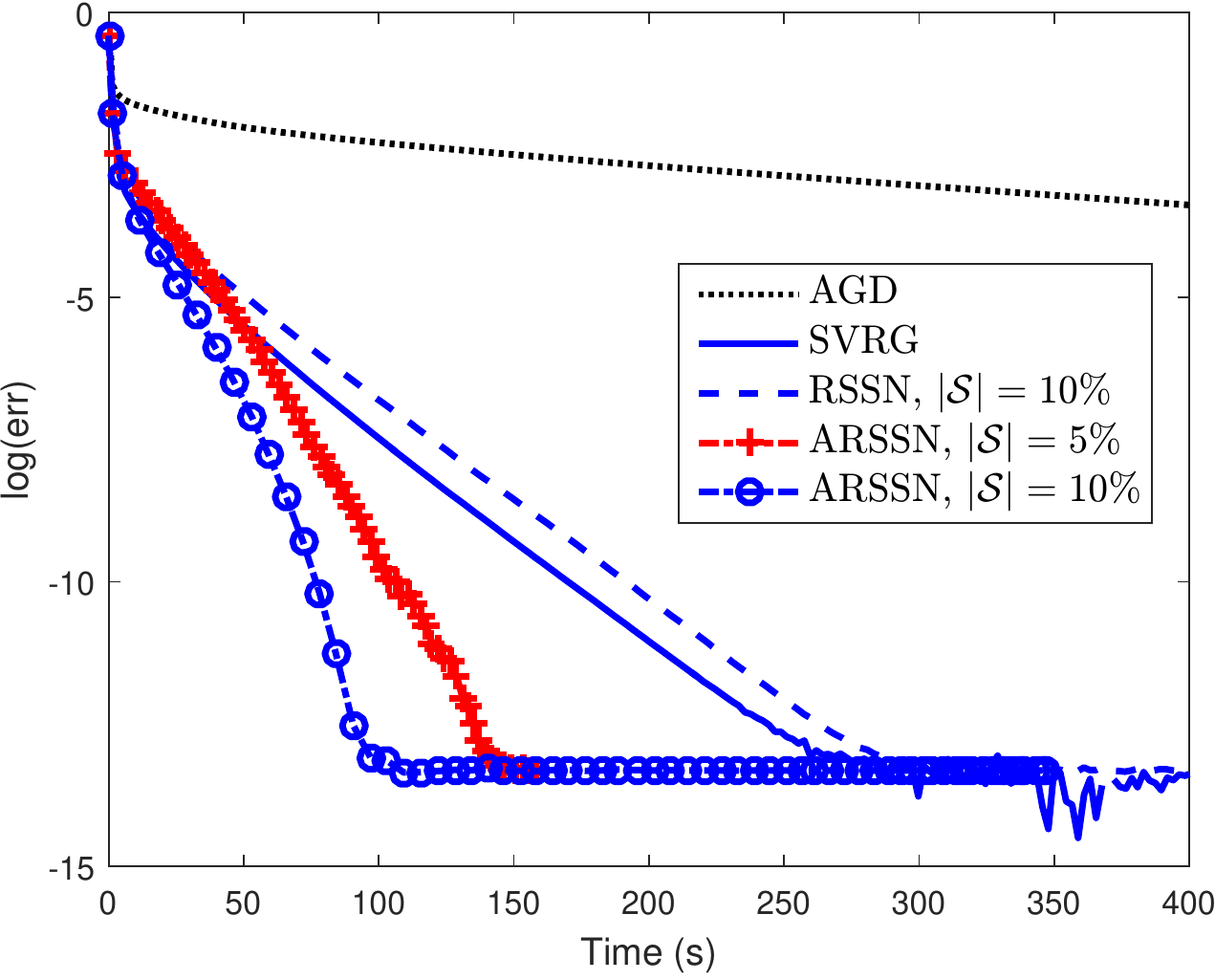}}~
		\subfigure[$\lambda = 10^{-2}/n$]{\includegraphics[width=45mm]{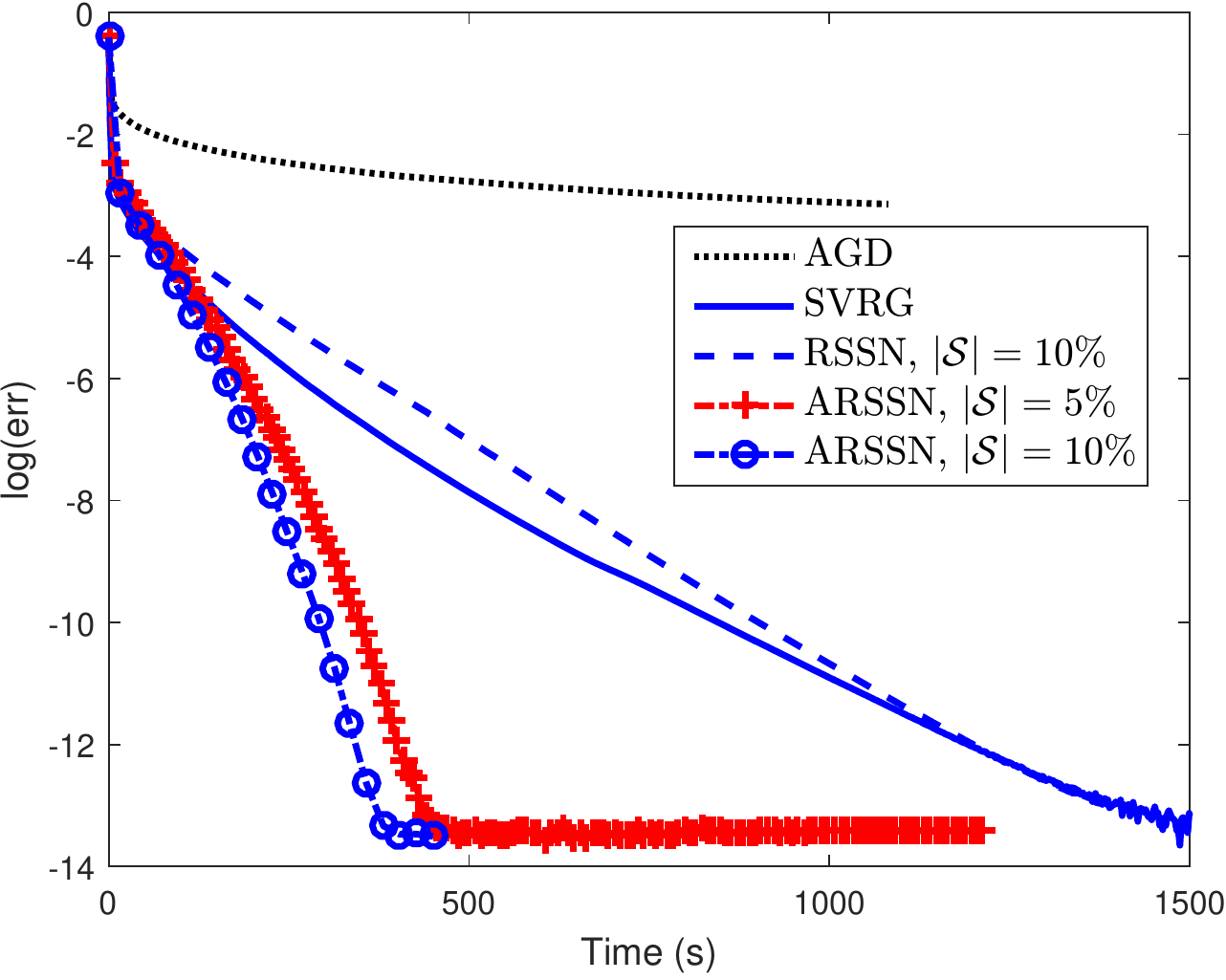}}
	\end{center}
	\vskip -0.2in
	\caption{Experiment on `avazu'}
	\vskip -0.2in
	\label{fig:avazu}
\end{figure}
\section{Conclusion} \label{sec:conclusion}
In this paper, we have exploited Nesterov's acceleration technique to promote the performance of second-order methods and propose Accelerated Regularized Sub-sample Newton. We have presented the theoretical analysis on the convergence properties of accelerated second-order methods, showing that accelerated approximate Newton has higher convergence rate, especially when the approximate Hessian is not a good approximation. Based on our theory, we have developed ARSSN. Our experiments have shown that our ARSSN performs much better than the conventional RSSN, which meets our theory well. ARSSN also has several advantages over other classical algorithms,  
demonstrating the efficiency of accelerated second-order methods. 

\bibliographystyle{plainnat}
\bibliography{reference}
\newpage
\appendix 

\section{Conjugate Gradient Descent}
Conjugate Gradient is a classical method to solve the linear system of equation
\begin{equation*}
Ax = b,
\end{equation*}
where $A$ is a $d\times d$ symmetric positive definite matrix. Algorithm~\ref{alg:cg} gives the detailed implementation to solve above linear system. Conjugate gradient has the following convergence properties \cite{golub2012matrix}:
\begin{equation*}
\|x_{k} - x_*\|_A \leq 2\left(\frac{\sqrt{\kappa(A)}-1}{\sqrt{\kappa(A)} + 1}\right)^k\|x_0 - x_*\|_A.
\end{equation*}

As we can see, the convergence behavior of conjugate gradient method depends on the condition of $A$. Hence, it suffers from poor convergence rate When $A$ is ill-conditioned. Precondition method is an important way to improve the convergence properties. In the following Lemma, we give the sufficient condition of a preconditioner.  

\begin{lemma}\label{lem:refine}
	If $A$ and $B$ are $d\times d$ symmetric positive matrices, and $(1-\varepsilon)B \preceq A \preceq (1+\varepsilon) B$, where $0<\epsilon_0<1$, then for the optimization problem $\min_{x}\|Ax-b\|$, we have
	\[
	\|x_1-x_*\|_{A} \leq \varepsilon \|x_*\|_A,
	\]
	where $x_* = A^{-1}b$ and $x_1 = B^{-1}b$. Besides, if we set $r_k = Ax_k - b$, $x_{kr} = B^{-1}r_k$ and $x_{k+1} = x_k+x_{kr}$, then $\|x_{k+1} -x_{*}\|_A \leq \epsilon_0\|x_{k} -x_{*}\|_A$.
\end{lemma}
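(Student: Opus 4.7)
\medskip

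\noindent\textbf{Proof proposal for Lemma~\ref{lem:refine}.}  The plan is to rewrite the error $x_k - x_*$ in a form that can be controlled directly by the spectral sandwich hypothesis on $A$ and $B$.  For the first claim, I would note that since $b = A x_*$ we have
\[
x_1 - x_* \;=\; B^{-1}b - A^{-1}b \;=\; B^{-1}A x_* - x_* \;=\; (B^{-1}A - I)\, x_*.
\]
The matrix $B^{-1}A$ is generally nonsymmetric, so the trick is to conjugate by $A^{1/2}$: inserting $A^{-1/2}A^{1/2}$ and taking norms,
\[
\|x_1 - x_*\|_A \;=\; \bigl\|A^{1/2}(B^{-1}A - I)x_*\bigr\| \;=\; \bigl\|(A^{1/2}B^{-1}A^{1/2} - I)\, A^{1/2}x_*\bigr\| \;\le\; \bigl\|A^{1/2}B^{-1}A^{1/2} - I\bigr\|\,\|x_*\|_A.
\]
Thus the entire claim reduces to showing that the symmetric operator $C := A^{1/2}B^{-1}A^{1/2}$ is close to the identity in spectral norm.

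\medskip

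\noindent Next I would translate the hypothesis $(1-\varepsilon)B \preceq A \preceq (1+\varepsilon)B$ into a bound on $\|C - I\|$.  Conjugating the inequality by $A^{-1/2}$ gives $(1-\varepsilon)\,A^{-1/2}BA^{-1/2} \preceq I \preceq (1+\varepsilon)\,A^{-1/2}BA^{-1/2}$, and then inverting yields
\[
(1+\varepsilon)^{-1} I \;\preceq\; C \;\preceq\; (1-\varepsilon)^{-1} I,
\]
so every eigenvalue of $C - I$ lies in $[-\varepsilon/(1+\varepsilon),\ \varepsilon/(1-\varepsilon)]$ and consequently $\|C-I\|\le \varepsilon/(1-\varepsilon)$.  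Combined with the previous display this proves the first inequality up to the harmless constant factor $1/(1-\varepsilon)$; this is the sense in which the symbol $\varepsilon$ in the statement should be read, and it matches the way the lemma is applied in Theorem~\ref{thm:sub_solver}, where an $\varepsilon=1/3$ subspace embedding is converted into a contraction rate of $1/2$.

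\medskip

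\noindent For the iterative refinement part, the identical algebra applies once one observes that the residual already encodes the error: $r_k = Ax_k - b = A(x_k - x_*)$.  Therefore
\[
x_{k+1} - x_* \;=\; (x_k - x_*) - B^{-1}A(x_k - x_*) \;=\; (I - B^{-1}A)(x_k - x_*),
\]
so passing to the $A$-norm and inserting $A^{1/2}A^{-1/2}$ exactly as before gives $\|x_{k+1} - x_*\|_A \le \|I - C\|\,\|x_k - x_*\|_A$.  The same spectral estimate on $C$ then yields the claimed geometric contraction, and iterating recovers the first statement as the $k=0$ case with $x_0=0$.

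\medskip

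\noindent The only nontrivial step is the symmetrization trick in the first paragraph: without conjugating by $A^{1/2}$ one is stuck bounding the nonsymmetric operator $B^{-1}A$ in a norm it does not naturally respect.  Once that is in place, the rest is a direct consequence of monotonicity of matrix inversion applied to the sandwich hypothesis, and no further work is required.
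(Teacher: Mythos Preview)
Your overall structure---reduce everything to bounding $\|C-I\|$ with $C=A^{1/2}B^{-1}A^{1/2}$ via the symmetrization $\|x_1-x_*\|_A=\|(C-I)A^{1/2}x_*\|$---is exactly what the paper does, and your treatment of the iterative refinement part (including the tacit sign correction $x_{k+1}=x_k-B^{-1}r_k$) matches the paper's as well.

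The one real slip is in your spectral bound on $C$. After conjugating by $A^{-1/2}$ you correctly obtain $(1-\varepsilon)M\preceq I\preceq(1+\varepsilon)M$ with $M=A^{-1/2}BA^{-1/2}=C^{-1}$. But inverting this gives $(1-\varepsilon)I\preceq C\preceq(1+\varepsilon)I$, not the interval $[(1+\varepsilon)^{-1},(1-\varepsilon)^{-1}]$ that you wrote; the latter is the eigenvalue range of $M$, not of $C$. (Concretely, take $B=I$ and $A=(1-\varepsilon)I$: then $C=(1-\varepsilon)I$, and your claimed lower bound $(1+\varepsilon)^{-1}I\preceq C$ fails.) With the correct sandwich one gets $\|C-I\|\le\varepsilon$ on the nose, which is precisely the constant in the lemma statement---no factor $1/(1-\varepsilon)$ is needed. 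The paper reaches the same conclusion by a slightly different route: it bounds the Rayleigh quotient $y^TAy/y^TBy\in[1-\varepsilon,1+\varepsilon]$ to locate the eigenvalues of $B^{-1/2}AB^{-1/2}$, and then uses that this matrix is similar to $A^{1/2}B^{-1}A^{1/2}$.

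Your reading of the $1/3\to 1/2$ conversion in Theorem~\ref{thm:sub_solver} is accordingly off: that step is not absorbing a lost constant from the lemma, it is rewriting the subspace-embedding inequality $(1-\tfrac{1}{3})\tilde B\tilde B^T\preceq P\preceq(1+\tfrac{1}{3})\tilde B\tilde B^T$ into the form $(1-\tfrac{1}{2})P\preceq \tilde B\tilde B^T\preceq(1+\tfrac{1}{2})P$ required by the lemma's hypothesis, after which the lemma is applied with $\varepsilon=\tfrac{1}{2}$ to produce exactly the $(1/2)^T$ contraction used there.
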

\begin{proof}
	Because $A \preceq (1+\varepsilon) B$, we have 
	\begin{align*}
	\lambda_{\max}(B^{-1}A) =& \lambda_{\max}(B^{-1/2}AB^{-1/2})\\
	=& \max_{x\neq 0} \frac{x^TB^{-1/2}AB^{-1/2}x}{x^Tx}\\
	=& \max_{y\neq 0} \frac{y^T A y}{y^T B y}\\
	\leq& 1+\varepsilon, 
	\end{align*}
	where the last equality is by setting $y = B^{-1/2}x$. Similarly, we have $\lambda_{\min}(B^{-1}A) \geq 1-\varepsilon$. 
	
	Since $B^{-1}A$ and $A^{1/2}B^{-1} A^{1/2}$ are similar, the eigenvalues of $A^{1/2}B^{-1} A^{1/2}$ are all between $1-\varepsilon$ and $1+\varepsilon$. Therefore, we have 
	\[
	\|A^{1/2}B^{-1} A^{1/2} - I\| \leq \varepsilon,
	\]
	where $I$ is the $d\times d$ identity matrix.
	
	Hence, we have 
	\begin{align*}
	\|B^{-1}b -x_*\|_A =& \|A^{1/2}(B^{-1}Ax_* - x_*)\| \\
	=& \|A^{1/2}B^{-1} A^{1/2}(A^{1/2}x_*) - A^{1/2}x_*\|\\
	\leq& \|A^{1/2}B^{-1} A^{1/2} - I\|\cdot \|A^{1/2}x_*\|\\
	=& \varepsilon\|x_*\|_A.
	\end{align*}
	
	For the convergence property, we have 
	\begin{align*}
	\|x_{k+1} -x_{*}\|_A =& \|B^{-1}(Ax_k-b) + x_k - A^{-1}b\|_A \\
	=&\|B^{-1}(Ax_k-b) -A^{-1}(Ax_k-b)\|_A\\
	\leq&\epsilon_0\|x_k-A^{-1}b\|_A\\
	=& \varepsilon\|x_k-x_*\|_A.
	\end{align*}
	The inequality is due to the property $\|B^{-1}b -x_*\|_A \leq \|x_*\|_A$ and $b$ replaced with $Ax_k-b$ and $x_*$ replaced with $A^{-1}(Ax_k-b)$.
\end{proof}

Let $A$ and $b$ be replaced by $B^{-1/2}AB^{-1/2}$ and $B^{-1/2}b$ respectively. And with some transformation, the iterations described in Lemma~\ref{lem:refine} can be transformed into preconditioned conjugate gradient method described in Algorithm~\ref{alg:pcg}. 
From Lemma~\ref{lem:refine}, we can see that preconditioned conjugate gradient converges linearly with a constant rate using a good preconditioner.

\section{Regularized Sub-sampled Newton}\label{app:rsn}

The regularized Sub-sampled Newton method is depicted in Algorithm~\ref{alg:reg_subsamp}, and we now give its local convergence properties in the following theorem \cite{ye2017approximate}.

\begin{theorem} \label{thm:Reg_subnewton}
	Let $F(x)$ satisfy Assumption 1 and 2.  Assume Eqns.~\eqref{eq:k} and~\eqref{eq:sigma} hold, and let $0<\delta<1$, $0\leq\epsilon_1<1$	and $0<\alpha$ be given. Assume $\beta$ is a constant such that $0<\beta< \alpha + \frac{\sigma}{2}$,
	the subsampled size $|\SM|$ satisfies $|\SM| \geq \frac{16K^2\log(2d/\delta)}{\beta^2}$, 
	and $H^{(t)}$ is constructed as in Algorithm~\ref{alg:reg_subsamp}.  
	Define 
	\begin{align*}
	\epsilon_0 = \max\left(\frac{\beta - \alpha}{\sigma + \alpha -\beta} ,\frac{\alpha+\beta}{\sigma+\alpha+\beta}\right),
	\end{align*}
	which implies that $0<\epsilon_0<1$. And we define $\|x\|_{M^*} = \|[M^{*}]^{-\frac{1}{2}}x\|$. Then Algorithm~\ref{alg:reg_subsamp} has the following convergence properties:
	\begin{enumerate}
		\item There exists a sufficient small value $\gamma$, $0<\nu(t)<1$, and $0<\eta(t)<1$ such that when $\|x^{(t)} - x^*\|\leq \gamma$,  each iteration satisfies  \begin{equation*}
		\|\nabla F(x^{(t+1)})\|_{M^*} \leq \left(\epsilon_0 + \frac{2\eta(t)}{1-\epsilon_0}\right)\frac{1+\nu(t)}{1-\nu(t)}\|\nabla F(x^{(t)})\|_{M^*}. \label{eq:lin_conv}
		\end{equation*}	 
		Besides, $\nu(t)$ and $\eta(t)$ will go to $0$ as $x^{(t)}$ goes to $x^*$.
		\item If $\nabla^{2}F(x^{(t)})$ is also Lipschitz continuous  with parameter $\hat{L}$ and $x^{(t)}$ satisfies
		\begin{equation*}
		\|x^{(t)} - x^{*}\| \leq \frac{\mu}{\hat{L}\kappa}\nu(t), 
		\end{equation*}
		where $0<\nu(t)<1$, then it holds that
		\begin{align*}
		\|\nabla F(x^{(t+1)})\|_{M^*} \leq& \epsilon_0 \frac{1+\nu(t)}{1-\nu(t)}\|\nabla F(x^{(t)})\|_{M^*}	+ \frac{2}{(1-\epsilon_0)^2}\frac{\hat{L}\kappa}{\mu\sqrt{\mu}} \frac{(1+\nu(t))^2}{1-\nu(t)}\|\nabla F(x^{(t)})\|_{M^*}^2. 
		\end{align*}
	\end{enumerate}
\end{theorem}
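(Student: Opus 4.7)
The plan is to apply Lemma~\ref{lem:univ} with $\theta=0$ (the unaccelerated case corresponding to RSSN) and then bound the operator norm of the residual $I-[\nabla^2F(x^{*})]^{1/2}[H^{(t+1)}]^{-1}[\nabla^2F(x^{*})]^{1/2}$ by $\epsilon_0(1+\nu(t))/(1-\nu(t))$. The $\epsilon_0$ contribution captures the intrinsic quality of the regularized sub-sampled Hessian, while the multiplicative $\nu$-correction absorbs the discrepancy between $\nabla^2F(x^{(t)})$ and $\nabla^2F(x^{*})$, which vanishes as $x^{(t)}\to x^{*}$.

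First, I would invoke the matrix Bernstein inequality on the i.i.d.\ sample $\{\nabla^2 f_j(x^{(t)})\}_{j\in\SM}$: since each summand is bounded in spectral norm by $K$ via \eqref{eq:k} and centered at $\nabla^2F(x^{(t)})$, the sample-size hypothesis $|\SM|\ge 16K^2\log(2d/\delta)/\beta^2$ yields
\[
\|H^{(t)}-\alpha I-\nabla^2F(x^{(t)})\|\le\beta
\]
with probability at least $1-\delta$. Equivalently, $\nabla^2F(x^{(t)})+(\alpha-\beta)I\preceq H^{(t)}\preceq\nabla^2F(x^{(t)})+(\alpha+\beta)I$, and both bounds are positive definite because $\beta<\alpha+\sigma/2$ together with \eqref{eq:sigma}.

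Next, I would diagonalize in the eigenbasis of $\nabla^2F(x^{*})$, replacing $\nabla^2F(x^{(t)})$ by $\nabla^2F(x^{*})$ up to a factor $(1\pm\nu(t))$, with $\nu(t)$ measuring $\|\nabla^2F(x^{(t)})-\nabla^2F(x^{*})\|$ and vanishing as $x^{(t)}\to x^{*}$ by continuity of the Hessian. On an eigenvector of $\nabla^2F(x^{*})$ with eigenvalue $\lambda\ge\sigma$, the rotated residual operator has eigenvalue of the form $\delta/(\lambda+\delta)$ with $\delta\in[\alpha-\beta,\alpha+\beta]$; optimizing $|\delta|/(\lambda+\delta)$ over the two signs of $\delta$ at the extremal value $\lambda=\sigma$ reproduces precisely the two branches of
\[
\epsilon_0=\max\!\left(\tfrac{\beta-\alpha}{\sigma+\alpha-\beta},\;\tfrac{\alpha+\beta}{\sigma+\alpha+\beta}\right),
\]
so that $\|I-[\nabla^2F(x^{*})]^{1/2}[H^{(t+1)}]^{-1}[\nabla^2F(x^{*})]^{1/2}\|\le\epsilon_0(1+\nu(t))/(1-\nu(t))$.

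Substituting this operator-norm estimate into Lemma~\ref{lem:univ} with $\theta=0$ and bundling the $o(\nabla F(x^{(t)}))$ remainder into an additive $2\eta(t)/(1-\epsilon_0)$ correction (with $\eta(t)\to 0$) delivers statement~(1). For statement~(2), I would promote the $o$-remainder to an explicit quadratic term via $\hat L$-Lipschitzness of $\nabla^2F$: writing $\nabla F(x^{(t)})=\nabla^2F(x^{*})(x^{(t)}-x^{*})+R^{(t)}$ with $\|R^{(t)}\|\le\tfrac{\hat L}{2}\|x^{(t)}-x^{*}\|^{2}$ and propagating it through the Lemma~\ref{lem:univ} identity, together with $[H^{(t)}]^{-1}\preceq(1-\epsilon_0)^{-1}[\nabla^2F(x^{*})]^{-1}$ and the conversion between $\|\cdot\|$ and $\|\cdot\|_{M^{*}}$, one collects the stated prefactor $2\hat L\kappa/(\mu\sqrt\mu(1-\epsilon_0)^{2})$. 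The main obstacle will be the careful bookkeeping of the $\nu(t)$ and $\eta(t)$ terms and their interaction with the $(1-\epsilon_0)^{-1}$ factor, so that both genuinely tend to zero and the multiplicative constants match exactly; once this accounting is fixed, the remaining content is matrix Bernstein plus a Taylor expansion.
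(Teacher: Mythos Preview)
The paper does not actually prove this theorem: it is quoted verbatim from \cite{ye2017approximate} and stated in the appendix without any accompanying argument, so there is no in-paper proof to compare against.

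That said, your outline is a sound reconstruction of the standard argument and would go through. The matrix Bernstein step with the stated sample size does yield the two-sided sandwich $\nabla^2F(x^{(t)})+(\alpha-\beta)I\preceq H^{(t)}\preceq\nabla^2F(x^{(t)})+(\alpha+\beta)I$, and since both bounding operators commute with $\nabla^2F(x^{(t)})$ the eigenvalue computation you describe correctly recovers the two branches of $\epsilon_0$. One small adjustment: you phrase the diagonalization as taking place in the eigenbasis of $\nabla^2F(x^{*})$, but the clean $\epsilon_0$ bound really emerges in the $\nabla^2F(x^{(t)})$-metric; the role of $\nu(t)$ is then to convert between the $M^{(t)}$- and $M^{*}$-norms, which is exactly why it appears as a multiplicative $(1+\nu(t))/(1-\nu(t))$ on the entire right-hand side rather than only on $\epsilon_0$. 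Your interpretation of $\eta(t)$ as absorbing the Taylor remainder, and your route to the quadratic term in part~(2) via $\hat L$-Lipschitzness together with $[H^{(t)}]^{-1}\preceq(1-\epsilon_0)^{-1}[\nabla^2F(x^{(t)})]^{-1}$ and the $\|x^{(t)}-x^{*}\|\leftrightarrow\|\nabla F(x^{(t)})\|_{M^{*}}$ conversion, are both correct in spirit; the only real work, as you note, is the constant bookkeeping.
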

\begin{algorithm}[tb]
	\caption{Regularized Sub-sample Newton (RSSN).}
	\label{alg:reg_subsamp}
	\begin{small}
		\begin{algorithmic}[1]
			\STATE {\bf Input:} $x^{(0)}$, $0<\delta<1$, regularizer parameter $\alpha$, sample size $|\SM|$ ;
			\FOR {$t=0,1,\dots$ until termination}
			\STATE Select a sample set $\SM$, of size $|\SM|$ and $H^{(t)} = \frac{1}{|\SM|}\sum_{j\in\mathcal{S}}\nabla^2 f_j(x^{(t)}) + \alpha I$;	
			\STATE Update $x^{(t+1)}= x^{(t)}-\left[H^{(t)}\right]^{-1}\nabla F(x^{(t)})$;
			\ENDFOR
		\end{algorithmic}
	\end{small}
\end{algorithm}	

\section{Proof of Lemma~\ref{lem:univ}}

\begin{proof}
	By Taylor's theorem, we have
	\begin{align*}
	&\nabla F(x^{(t+1)}) \\
	=&\nabla F(y^{(t+1)})+\nabla^2 F(y^{(t+1)})(-p^{(t+1)}) + o(p^{(t+1)})\\
	=&\nabla F(y^{(t+1)}) - \nabla^2 F(y^{(t+1)})[H^{(t+1)}]^{-1}\nabla F(y^{(t+1)}) + o(p^{(t+1)}) \\
	=& \nabla F(y^{(t+1)}) - \nabla^2 F(x^{*})[H^{(t+1)}]^{-1}\nabla F(y^{(t+1)}) - (\nabla^2 F(x^{*}) - \nabla^2 F(y^{(t+1)}))[H^{(t+1)}]^{-1}\nabla F(y^{(t+1)}) + o(p^{(t+1)})\\
	=&\left[\nabla^2F(x^{*})\right]^{\frac{1}{2}} \left(I- [\nabla^2F(x^{*})]^{\frac{1}{2}}[H^{(t+1)}]^{-1}[\nabla^2F(x^{*})]^{\frac{1}{2}}\right)\left[\nabla^2F(x^{*})\right]^{-\frac{1}{2}}\nabla F(y^{(t+1)})\\&+(\nabla^2F(y^{(t+1)}) - \nabla^2F(x^{*}))[H^{(t+1)}]^{-1}\nabla F(y^{(t+1)}) + o(p^{(t+1)}).
	\end{align*}
	
	For $\nabla F(y^{(t+1)})$, we have 
	\begin{align*}
	\nabla F(y^{(t+1)}) =& \nabla F(x^{(t)} + \theta s^{(t)})\\
	=&\nabla F(x^{(t)}) + \theta \nabla^2F(x^{(t)})(s^{(t)}) + o(s^{(t)})\\
	=&\nabla F(x^{(t)}) + \theta(\nabla F(x^{(t)}) - \nabla F(x^{(t-1)})) + o(s^{(t)})\\
	=&(1+\theta) \nabla F(x^{(t)}) - \theta\nabla F(x^{(t-1)}) +o(s^{(t)}). 
	\end{align*}
	
	Besides, we have 
	\begin{align*}
	o(s^{(t)})=& o(x^{(t)} - x^{(t-1)}) = o(x^{(t)} -x^{*} - (x^{(t-1)}-x^{*})) \\=& o(\nabla F(x^{(t)}))+o(\nabla F(x^{(t-1)})),
	\end{align*}
	where the last equality is because $\nabla F(x)$ is $L$-Lipschitz continuous. Hence, it holds that 
	\begin{align}
	\nabla F(y^{(t+1)}) = (1+\theta) \nabla F(x^{(t)}) - \theta\nabla F(x^{(t-1)}) + o(\nabla F(x^{(t)})+\nabla F(x^{(t-1)})) \label{eq:iter_2}
	\end{align}
	
	For $(\nabla^2F(y^{(t+1)}) - \nabla^2F(x^{*}))[H^{(t+1)}]^{-1}\nabla F(y^{(t+1)})$, we show that it is of order $o(\nabla F(y^{(t+1)}))$ as follows. 
	First, if $\nabla^2F(x)$ is not Lipschitz continuous, then there exists a $\gamma$ such that when $\|y - x^{*}\|\leq \gamma$, it holds that 
	\[
	\|\nabla^2F(y) -\nabla^2F(x^{*})\| = o(1).
	\]
	Such $\gamma$ exists because $\nabla^2F(x)$ is continuous near optimal point $x^{*}$.  Hence, $(\nabla^2F(y^{(t+1)}) - \nabla^2F(x^{*}))[H^{(t+1)}]^{-1}\nabla F(y^{(t+1)})$ is of order $o(\nabla F(y^{(t+1)}))$ when $y^{(t+1)}$ is sufficient close to $x^*$. 
	
	If $\nabla^2F(x)$ is $\HL$-Lipschitz continuous and $F(x)$ is $\mu$-strongly convex, then we have 
	\[
	\|\nabla^2F(y^{(t+1)}) -\nabla^2F(x^{*})\| \leq \HL \|y^{(t+1)} - x^{*}\| \leq \frac{\hat{L}}{\mu}\|\nabla F(y^{(t+1)})\|.
	\]
	Then, it holds that 
	\[
	\|(\nabla^2F(y^{(t+1)}) - \nabla^2F(x^{*}))[H^{(t+1)}]^{-1}\nabla F(y^{(t+1)})\| = O(\|\nabla F(y^{(t+1)})\|^2).
	\]
	
	Besides, because of $p^{(t+1)} = [H^{(t+1)}]^{-1}\nabla F(y^{(t+1)})$, we have 
	\[
	o(p^{(t+1)}) = o(\nabla F(y^{(t+1)})).
	\]
	Combining Eqn.~\eqref{eq:iter_2}, we have 
	\begin{align*}
	(\nabla^2F(y^{(t+1)}) - \nabla^2F(x^{*}))[H^{(t+1)}]^{-1}\nabla F(y^{(t+1)}) = o(\nabla F(x^{(t)}))+o(\nabla F(x^{(t-1)})) 
	\end{align*}
	and 
	\begin{align*}
	o(p^{(t+1)}) = o(\nabla F(x^{(t)}))+o(\nabla F(x^{(t-1)})).
	\end{align*}
	
	Hence, we have the following result
	\begin{align*}
	&[\nabla^2F(x^{*})]^{-\frac{1}{2}}\nabla F(x^{(t+1)}) \\=& \left(I- [\nabla^2F(x^{*})]^{\frac{1}{2}}[H^{(t+1)}]^{-1}[\nabla^2F(x^{*})]^{\frac{1}{2}}\right)\left((1+\theta)\left[\nabla^2F(x^{*})\right]^{-\frac{1}{2}}\nabla F(x^{(t)}) - \theta \left[\nabla^2F(x^{*})\right]^{-\frac{1}{2}}\nabla F(x^{(t-1)})\right) \\
	&+o(\nabla F(x^{(t)}))+o(\nabla F(x^{(t-1)})).
	\end{align*}
\end{proof}

\begin{algorithm}[tb]
	\caption{Conjugate Gradient Descent Method.}
	\label{alg:cg}
	\begin{small}
		\begin{algorithmic}[1]
			\STATE {\bf Input:}  $A$, $b$,$x_0$, and $tol$;
			\STATE Set $r_0 = Ax_0-b$, $p_0 = -r_0$, $k=0$;
			\WHILE  {$\|r_k\| > tol$}
			\STATE Calculate $\alpha_k = \frac{r_k^T r_k}{p_k^T A p_k}$;
			\STATE Calculate $x_{k+1} = x_k + \alpha_k p_k$ and $r_{k+1} = r_k + \alpha_k A p_k$;
			\STATE Calculate $\beta_{k+1} = \frac{r_{k+1}^T r_{k+1}}{r_k^T r_k}$ and $p_{k+1} = -r_{k+1} + \beta_{k+1}p_k$;
			\STATE $k = k+1$;
			\ENDWHILE
			\STATE {\bf Output:} $x_k$.
		\end{algorithmic}
	\end{small}
\end{algorithm}

\begin{algorithm}[tb]
	\caption{Preconditioned Conjugate Gradient Descent Method.}
	\label{alg:pcg}
	\begin{small}
		\begin{algorithmic}[1]
			\STATE {\bf Input:} $x_0$, iteration number $T$, and preconditioner $P$;
			\STATE Set $r_0 = Ax_0-b$, solve $Py_0 = r_0$, set $p_0 = -y_0$, $k=0$;
			\WHILE  {$k < T$}
			\STATE Calculate $\alpha_k = \frac{r_k^T r_k}{p_k^T A p_k}$;
			\STATE Calculate $x_{k+1} = x_k + \alpha_k p_k$ and $r_{k+1} = r_k + \alpha_k A p_k$;
			\STATE Solve $Py_{k+1} = r_{k+1}$;
			\STATE Calculate $\beta_{k+1} = \frac{r_{k+1}^T y_{k+1}}{r_k^T y_k}$ and $p_{k+1} = -y_{k+1} + \beta_{k+1}p_k$;
			\STATE $k = k+1$;
			\ENDWHILE
			\STATE {\bf Output:} $x_k$.
		\end{algorithmic}
	\end{small}
\end{algorithm}

\section{Proof of Theorem~\ref{thm:acc_lsr}} \label{app:acc_lsr}

\begin{proof} {\bf of Theorem~\ref{thm:acc_lsr} }
	By Taylor's theorem and the property of qudratic function, we have
	\begin{align*}
	&\nabla F(x^{(t+1)}) \\
	=&\nabla F(y^{(t+1)})+\nabla^2 F(y^{(t+1)})(-p^{(t+1)})\\
	=&\nabla F(y^{(t+1)}) - \nabla^2 F(y^{(t+1)})[H^{(t+1)}]^{-1}\nabla F(y^{(t+1)}) \\
	=& \nabla F(y^{(t+1)}) - \nabla^2 F(x^{*})[H^{(t+1)}]^{-1}\nabla F(y^{(t+1)}) + \nabla^2 F(x^{*})\left([H^{(t+1)}]^{-1}\nabla F(y^{(t+1)}) - p^{(t)}\right)\\
	=&\left[\nabla^2F(x^{*})\right]^{\frac{1}{2}} \left(I- [\nabla^2F(x^{*})]^{\frac{1}{2}}[H^{(t+1)}]^{-1}[\nabla^2F(x^{*})]^{\frac{1}{2}}\right)\left[\nabla^2F(x^{*})\right]^{-\frac{1}{2}}\nabla F(y^{(t+1)})\\& + \nabla^2 F(x^{*})\left([H^{(t+1)}]^{-1}\nabla F(y^{(t+1)}) - p^{(t)}\right).
	\end{align*}
	
	For $\nabla F(y^{(t+1)})$, we have 
	\begin{align*}
	\nabla F(y^{(t+1)}) =& \nabla F(x^{(t)} + \theta s^{(t)})\\
	=&\nabla F(x^{(t)}) + \theta \nabla^2F(x^{(t)})(s^{(t)}) + o(s^{(t)})\\
	=&\nabla F(x^{(t)}) + \theta(\nabla F(x^{(t)}) - \nabla F(x^{(t-1)})) \\
	=&(1+\theta) \nabla F(x^{(t)}) - \theta\nabla F(x^{(t-1)}). 
	\end{align*}
	
	For notational convinience, we use $M_*$ to denote $[\nabla^2F(x^{*})]^{-\frac{1}{2}}$, we have
	\begin{align}
	&\EB\left(M_*\nabla F(x^{(t+1)})\right) \notag\\=& \left(I- [\nabla^2F(x^{*})]^{\frac{1}{2}}\EB\left([H^{(t+1)}]^{-1}\right)[\nabla^2F(x^{*})]^{\frac{1}{2}}\right)\left((1+\theta)M_*\nabla F(x^{(t)}) - \theta M_*\nabla F(x^{(t-1)})\right)\notag
	\\&+[\nabla^2 F(x^{*})]^{\frac{1}{2}}\EB\left([H^{(t+1)}]^{-1}\nabla F(y^{(t+1)}) - p^{(t)}\right). \label{eq:or_eq}
	\end{align}
	Because sketching matrices share the same distribution, $\EB\left([H^{(t+1)}]^{-1}\right)$ is a constant matrix. we use the following notation for convenience\
	\begin{align*}
	K = I- [\nabla^2F(x^{*})]^{\frac{1}{2}}\EB\left([H^{(t+1)}]^{-1}\right)[\nabla^2F(x^{*})]^{\frac{1}{2}}.
	\end{align*}
	And $K$ has the following spectral decomposition
	\begin{align*}
	K = U\Lambda U^T,
	\end{align*}
	with $\lambda_1 \geq \lambda_2\dots \lambda_d$
	We can reformulate the Eqn.~\eqref{eq:or_eq} as 
	\begin{align*}
	\left[
	\begin{array}{c}
	\EB\left(M_*\nabla F(x^{(t+1)})\right)  \\
	M_*\nabla F(x^{(t)})  \\
	\end{array} \right] = &\left[\begin{array}{cc}
	(1+\theta) K, & -\theta K\\
	I, & 0\\
	\end{array}\right] \cdot\left[
	\begin{array}{c}
	M_*\nabla F(x^{(t)}) \\
	M_*\nabla F(x^{(t-1)})  \\
	\end{array} \right] \\& +[\nabla^2 F(x^{*})]^{\frac{1}{2}}\EB\left([H^{(t+1)}]^{-1}\nabla F(y^{(t+1)}) - p^{(t)}\right).
	\end{align*}
	Define the matrix
	\begin{align*}
	T = \left[\begin{array}{cc}
	(1+\theta) K, & -\theta K\\
	I, & 0\\
	\end{array}\right].
	\end{align*}
	Thus, we obtain
	\begin{align*}
	&\left[
	\begin{array}{c}
	\EB\left(M_*\nabla F(x^{(t+1)})\right)  \\
	M_*\nabla F(x^{(t)})  \\
	\end{array} \right]\\ = &T^{t} \cdot\left[
	\begin{array}{c}
	M_*\nabla F(x^{(1)}) \\
	M_*\nabla F(x^{(0)})  \\
	\end{array} \right]  +\sum_{i=0}^{t}T^{t-i}[\nabla^2 F(x^{*})]^{\frac{1}{2}}\EB\left([H^{(i+1)}]^{-1}\nabla F(y^{(i+1)}) - p^{(i)}\right).
	\end{align*}
	
	Let $\Pi$ be the $2d\times 2d$ matrix with entries
	\begin{equation*}
	\Pi_{i,j}=\left\{
	\begin{aligned}
	&1\qquad  i\: \mathrm{odd},\; j = i \\
	&1 \qquad i\; \mathrm{even},\; j= 2n+i\\
	&0 \qquad \mathrm{otherwise}
	\end{aligned}
	\right.
	\end{equation*} 
	Then, by conjugation, we have
	\begin{align*}
	&\Pi \left[\begin{array}{cc}
	U, & 0\\
	0, & U
	\end{array}\right]^T\left[\begin{array}{cc}
	(1+\theta) K, & -\theta K\\
	I, & 0\\
	\end{array}\right]\left[\begin{array}{cc}
	U, & 0\\
	0, & U
	\end{array}\right]\Pi^T\\
	=&\Pi \left[\begin{array}{cc}
	(1+\theta)\Lambda,&-\theta \Lambda\\
	I, & 0
	\end{array}\right]\Pi^T\\
	=&\begin{bmatrix}
	T_1 & 0 &\cdots& 0\\
	0& T_2 & \cdots& 0\\
	\vdots& &\ddots& \vdots\\
	0 & 0& \cdots& T_d
	\end{bmatrix}
	\end{align*}
	with
	\begin{align*}
	T_i = \begin{bmatrix}
	(1+\theta)\lambda_i & -\theta\lambda_i\\
	1                   &  0 
	\end{bmatrix}.
	\end{align*}
	That is $T$ is similar to the block diagonal matrix with $2\times 2$ diagonal blocks $T_i$. And the eigenvalues of $T_i$ are the roots of 
	\begin{align*}
	a^2 - a(1+\theta)\lambda_i +\theta\lambda_i = 0.
	\end{align*}
	
	Because of
	\begin{align*}
	(1-\pi)\nabla F(x^{\star})\preceq\EB\left[H^{-1}\right]\preceq  \nabla F(x^{\star}),
	\end{align*}
	we have 
	\[
	0 \leq \lambda_i \leq \pi.
	\]
	Setting $\theta = \frac{1-\sqrt{1-\pi}}{1+\sqrt{1-\pi}} - \epsilon_0$, the larger eigenvalue of $T_1$ is $q = 1-\sqrt{1-\pi} + O(\sqrt{\epsilon_0})$. And Let $S_1$ be the matrix consisting of eigenvectors of $T_1$. We denote 
	\[
	c_1 = \|S_1\|\cdot\|S_1^{-1}\|.
	\] 
	In fact, $T_1$ dominates the convergence rate of $T^t$.
	Then we have
	\begin{align*}
	\|T^t\|\leq c_1\cdot(1-\sqrt{1-\pi})^t= c_1q^t.
	\end{align*}
	
	Thus, we have
	\begin{align*}
	&\left\lVert\EB\left[
	\begin{array}{c}
	M_*\nabla F(x^{(t+1)})  \\
	M_*\nabla F(x^{(t)})  \\
	\end{array} \right]\right\rVert\\ 
	\leq &c_1 q^t \cdot\left\lVert\left[
	\begin{array}{c}
	M_*\nabla F(x^{(1)}) \\
	M_*\nabla F(x^{(0)})  \\
	\end{array} \right]\right\rVert +c_1\sum_{i=0}^{t}q^{t-i}\left\lVert[\nabla^2 F(x^{*})]^{\frac{1}{2}}\EB\left([H^{(i+1)}]^{-1}\nabla F(y^{(i+1)}) - p^{(i)}\right)\right\rVert \\
	=&c_1 q^t \cdot A_0 +c_1\sum_{i=0}^{t}q^{t-i}\left\lVert[\nabla^2 F(x^{*})]^{\frac{1}{2}}\EB\left([H^{(i+1)}]^{-1}\nabla F(y^{(i+1)}) - p^{(i)}\right)\right\rVert, 
	\end{align*}
	where $A_0 = \left\lVert\left[
	\begin{array}{c}
	M_*\nabla F(x^{(1)}) \\
	M_*\nabla F(x^{(0)})  \\
	\end{array} \right]\right\rVert$.
	
	Due to Eqn.~\eqref{eq:H_prop},~\eqref{eq:p_inexact} and using notation
	\[
	\tilde{\epsilon}_1 = \frac{\epsilon_1}{12c_1\sqrt{\kappa}},
	\] we also have
	\begin{align*}
	&\left\lVert[\nabla^2 F(x^{*})]^{\frac{1}{2}}\EB\left([H^{(i+1)}]^{-1}\nabla F(y^{(i+1)}) - p^{(i)}\right)\right\rVert \\
	=& \left\lVert[\nabla^2 F(x^{*})]^{\frac{1}{2}}\EB\left([H^{(i+1)}]^{-1}\left(\nabla F(y^{(i+1)}) - H^{(i+1)}p^{(i)}\right)\right)\right\rVert \\
	\leq& \tilde{\epsilon}_1 \left\lVert[\nabla^2 F(x^{*})]^{\frac{1}{2}}\EB\left([H^{(i+1)}]^{-1}\right) \right\rVert \cdot \left\lVert \EB \left(\nabla F(y^{(i+1)})\right)\right\rVert\\
	\leq&\tilde{\epsilon}_1\left\lVert [\nabla^2 F(x^{*})]^{\frac{1}{2}}\EB\left([H^{(i+1)}]^{-1}\right)  [\nabla^2 F(x^{*})]^{\frac{1}{2}} M_\star \right\rVert \cdot \left\lVert M_\star^{-1} \EB\left(M_\star\nabla F(y^{(ii1)})\right) \right\rVert\\
	\leq&\tilde{\epsilon}_1(1+\pi)\sqrt{\kappa}\left\lVert\EB\left(M_\star\nabla F(y^{(i+1)})\right) \right\rVert\\
	=&\tilde{\epsilon}_1(1+\pi)\sqrt{\kappa}\left\lVert [1+\theta, -\theta] \EB\left[\begin{array}{c}
	\nabla F(x^{(i)})\\
	\nabla F(x^{(ti-1)})
	\end{array} \right]\right\rVert\\
	\leq&6\tilde{\epsilon}_1\sqrt{\kappa}\left\lVert \EB \left[\begin{array}{c}
	\nabla F(x^{(i)})\\
	\nabla F(x^{(i-1)})
	\end{array} \right]\right\rVert
	\end{align*}
	
	We use induction to prove our convergence rate, it is trivial for $t=1$ because
	\begin{align*}
	\left\lVert\EB\left[
	\begin{array}{c}
	M_*\nabla F(x^{(2)})  \\
	M_*\nabla F(x^{(1)})  \\
	\end{array} \right]\right\rVert
	\leq&  c_1qA_0 + \frac{\epsilon_1}{2c_1}A_0 \\
	=&c_1A_0\left(q + \frac{\epsilon_1}{2c_1^2}\right)\\
	\leq&c_1A_0(q+\epsilon_1)\\
	<&2c_1pA_0.
	\end{align*}
	The second inequality is because $c_1$ is no smaller than $1$.
	We assume that 
	\begin{align*}
	\left\lVert \EB \left[\begin{array}{c}
	\nabla F(x^{(t+1)})\\
	\nabla F(x^{(t)})
	\end{array} \right]\right\rVert \leq C p^t, \;\mathrm{with}\; p = q+\epsilon_1 \;\mathrm{and}\; C = 2c_1A_0.
	\end{align*}
	Then, we have
	\begin{align*}
	&\left\lVert\EB\left[
	\begin{array}{c}
	M_*\nabla F(x^{(t+1)})  \\
	M_*\nabla F(x^{(t)})  \\
	\end{array} \right]\right\rVert\\ 
	\leq& c_1 q^t \cdot A_0 +c_1\sum_{i=0}^{t}q^{t-i}\left\lVert[\nabla^2 F(x^{*})]^{\frac{1}{2}}\EB\left([H^{(i+1)}]^{-1}\nabla F(y^{(i+1)}) - p^{(i)}\right)\right\rVert \\
	\leq&c_1q^t\cdot A_0+6\tilde{\epsilon}_1c_1\sqrt{\kappa}\sum_{i=1}^{t} q^{t-i}\left\lVert \EB \left[\begin{array}{c}
	\nabla F(x^{(i)})\\
	\nabla F(x^{(i-1)})
	\end{array} \right]\right\rVert \\
	\leq&c_1q^t \cdot A_0 + 6C\tilde{\epsilon}_1c_1\sqrt{\kappa}\sum_{i=1}^{t} q^{t-i}p^{i-1}\\
	=&c_1q^t\cdot A_0 + 6C\tilde{\epsilon}_1c_1\sqrt{\kappa}\sum_{i=1}^{t} q^{i}p^{i-i-1}\\
	\leq&c_1q^t\cdot A_0 + 6C\tilde{\epsilon}_1c_1\sqrt{\kappa}\frac{p^t}{p-q}\\
	\leq&\left(c_1A_0\left(\frac{q}{p}\right)^t + \frac{C}{2}\right)\cdot p^t \\
	\leq&\left(c_1A_0 +c_1A_0\right)\cdot p^t
	=C p^t
	\end{align*}
	
	Therefore, for all $t = 1,\dots,$, we have
	\begin{align*}
	\left\lVert \EB \left[\begin{array}{c}
	\nabla F(x^{(t+1)})\\
	\nabla F(x^{(t)})
	\end{array} \right]\right\rVert \leq C p^t  = 2c_1 p^t \left\lVert \left[
	\begin{array}{c}
	M_*\nabla F(x^{(1)})  \\
	M_*\nabla F(x^{(0)})  \\
	\end{array} \right]\right\rVert.
	\end{align*}
\end{proof}

\end{document}